\setlist[enumerate]{leftmargin=.5in}
\setlist[itemize]{leftmargin=.5in}
\DeclareMathOperator{\diag}{diag}
\newcommand{\mb}{\mathbb}
\newcommand{\mc}{\mathcal}
\newcommand{\rar}{\rightarrow}
\newcommand{\pxtwo}[2]{x_{#1,#2}}
\newcommand{\pxone}[1]{x_{#1}}
\newcommand{\pxt}[1]{x_{#1}}
\newcommand{\D}{D}
\newcommand{\bmat}[1]{\begin{bmatrix}#1\end{bmatrix}}
\newcommand{\spec}{\mathrm{spec}}
\newcommand{\f}{f}
 \newcommand{\tr}{\mathrm{tr}}
\newcommand{\dd}{\mathrm{d}}
 \newtheorem{assumption}{Assumption}
 \newcommand{\LNE}{{\tt{LNE}}}
 \newcommand{\LASE}{{\tt{LASE}}}
\newcommand{\SSP}{{\tt{SSP}}}
\newcommand{\NDDNE}{{\tt{NDDNE}}}
\newcommand{\DNE}{{\tt{DNE}}}
\newcommand{\nplayers}{n}
\newcommand{\dimsmth}{s}
\newif\ifsecs
\newif\ifzerosum
\newif\ifms
\begin{document}

\title{On Gradient-Based Learning in Continuous Games \thanks{This work was published in the SIAM Journal on Mathematics for Data Science on February 18, 2020 (https://doi.org/10.1137/18M1231298). This work was funded by the National Science Foundation Award
CNS:1656873 and the Defense Advanced Research Projects Agency Award FA8750-18-C-0101}}
\author{\name Eric Mazumdar \email emazumdar@eecs.berkeley.edu \\ \addr Department of Electrical Engineering and Computer Science\\
       University of California, Berkeley, CA 
       \AND
       \name Lillian J. Ratliff \email ratliffl@uw.edu \\ \addr Department of Electrical and Computer Engineering\\
       University of Washington, Seattle, WA 
       \AND
       \name S. Shankar Sastry \email sastry@coe.berkeley.edu \\ \addr Department of Electrical Engineering and Computer Science\\
       University of California, Berkeley, CA }

\editor{}

\maketitle

\begin{abstract}

We introduce a general framework for competitive gradient-based learning 
that encompasses a wide breadth of multi-agent learning algorithms, and analyze the limiting behavior of competitive gradient-based learning algorithms using dynamical systems theory. For both general-sum and potential games, we characterize  a
non-negligible subset of the local Nash equilibria that will be avoided if each agent employs a gradient-based learning algorithm. We also shed light on the issue of convergence to non-Nash strategies in general- and zero-sum games, which may have no relevance to the underlying game, and arise solely due
to the choice of algorithm. The existence and frequency of such strategies may
explain some of the difficulties encountered when using gradient descent in
zero-sum games as, e.g., in the training of generative adversarial networks. To reinforce the theoretical
contributions, we provide empirical results that highlight the frequency of linear quadratic dynamic games (a benchmark for multi-agent reinforcement learning) that admit global Nash equilibria that are almost surely avoided by policy gradient. %

 \end{abstract}

\begin{keywords}
  continuous games, gradient-based algorithms, multi-agent learning
\end{keywords}

\section{Introduction}

 With machine learning algorithms increasingly being deployed in real world settings, it is crucial that we understand how the algorithms can interact, and the dynamics that can arise from their interactions. In recent years, there has been a resurgence in research efforts on multi-agent learning, and learning in games. The recent interest in adversarial learning techniques also serves to show how game theoretic tools can be being used to \emph{robustify} and improve the performance of machine learning algorithms. Despite this activity, however, machine learning algorithms are still being treated as black-box approaches and being na\"{i}vely deployed in settings where other algorithms are actively changing the environment. In general, outside of highly structured settings, there exists no guarantees on the performance or limiting behaviors of learning algorithms in such settings. 

Indeed, previous work on understanding the collective behavior of coupled learning algorithms, either in competitive or cooperative settings,
has mainly looked at games where the global structure is well understood like  bilinear games\cite{GradDyn,hommes:2012aa,mertikopoulos:2018aa,leslie:2005aa}, convex games \cite{Mertikopoulos2019,Rosen1965}, or potential games \cite{monderer:1996aa}, among many others. Such games are more conducive to the statement of global convergence guarantees since the assumed global structure can be exploited.  

In games with fewer assumptions on the players' costs, however, there is still a lack of understanding of the dynamics and limiting behaviors of learning algorithms. Such settings are becoming increasingly prevalent as deep learning is increasingly being used in game theoretic settings \cite{goodfellow:2014aa,foerster:2017aa,abdallah:2008aa,zhang:2010aa}.

Gradient-based learning algorithms are extremely popular in a variety of these multi-agent settings due to their versatility, ease of implementation, and dependence on local information. There are numerous recent papers in multi-agent
reinforcement learning that employ  gradient-based methods (see,
e.g.\cite{abdallah:2008aa, foerster:2017aa,zhang:2010aa}), yet even within this well-studied class of learning algorithms, a thorough
understanding of their convergence and limiting behaviors in general continuous games is still lacking. 

Generally speaking, in both the game theory and the machine learning communities,  two of the central questions when analyzing the dynamics of learning in games are the following:
\begin{description}[leftmargin=20pt]
    \item[Q1.] \emph{Are all attractors of the learning algorithms employed by agents equilibria relevant to the underlying game?}
    \item[Q2.] \emph{Are all equilibria relevant to the game also attractors of the learning algorithms agents employ?}
\end{description}
In this paper, we provide some answers to the above questions for the class of gradient-based learning algorithms by analyzing their limiting behavior in general continuous games. 
In particular, we leverage the continuous time limit of the more naturally discrete multi-agent learning algorithms.
This allows us to draw on the extensive theory of dynamical systems and stochastic approximation to make statements about the limiting behaviors of these algorithms in both deterministic and stochastic settings. The latter is particularly relevant since it is common for stochastic gradient methods to be used in multi-agent machine learning contexts.

Analyzing gradient-based algorithms through the lens of dynamical systems theory has recently yielded new insights into their behavior in the classical optimization setting \cite{Wilson2016,Bach,lee:2016aa}. We show that a similar type of analysis can also help understand the limiting behaviors of gradient-based algorithms in games.  
We remark, however, that there is a \emph{fundamental difference} between the dynamics that are analyzed in much of the single-agent, gradient-based learning and optimization literature and the ones we analyze in the
competitive multi-agent case: the combined dynamics of gradient-based learning schemes in games \emph{do not necessarily correspond to a gradient flow}.
This may seem a subtle point, but it
it turns out to be extremely important. 

Gradient flows admit desirable convergence guarantees---e.g., almost sure
convergence to local minimizers---due to the fact that they
preclude flows with the \emph{worst geometries}~\cite{pemantle:2007aa}.
In particular, they do not exhibit non-equilibrium limiting behavior such as
periodic orbits. Gradient-based learning in games, on the other hand, does not
preclude such behavior. Moreover, as we show, asymmetry in the dynamics of gradient-play in games can lead to surprising behaviors such as non-relevant limiting behaviors being
attracting under the flow of the game dynamics and relevant limiting
behaviors, such as a subset of the Nash equilibria being almost surely avoided. 

\subsection{Related Work}
The study of continuous games is quite extensive~(see e.g. \cite{BasarOlsder, osborne:1994aa}), though in large part the focus has been on games admitting a fair amount of structure. The behavior of learning algorithms in games is also well-studied~(see e.g. \cite{fudenberg:1998aa}). In this section, we comment on the most relevant prior work and defer a more comprehensive discussion of our results in the context of prior work to Section~\ref{sec:discussion}.

As we noted, previous work on learning in games in both the game theory literature, and more recently from the machine learning community,
has largely focused on addressing (\textbf{Q1}) whether all attractors of the learning dynamics are game-relevant equilibria,  and (\textbf{Q2}) whether all game-relevant equilibria are also attractors of the learning dynamics. The primary type of game-relevant equilibrium considered in the investigation of these two questions is a Nash equilibrium. 

The majority of the existing work  has focused on \textbf{Q1}. 
In fact, a large body of prior work focuses on games with structures that preclude the existence of non-Nash equilibria. Consequently, answering \textbf{Q1} reduces to analyzing the convergence of various learning algorithms (including gradient-play) to the unique Nash equilibrium or the set of Nash equilibria. This is often shown by exploiting the game structure. Examples of classes of games where gradient-play has been well-studied are potential games~\cite{monderer:1996aa}, concave or monotone games \cite{Rosen1965,bravo2018bandit,Mertikopoulos2019}, and gradient-play over the space of stochastic policies in two-player finite-action bilinear games \cite{GradDyn}. In the latter setting, other gradient-like algorithms such as multiplicative weights have also been studied fairly extensively \cite{hommes:2012aa}, and have been shown to converge to cycling behaviors.

Some works have also attempted to address \textbf{Q1} in the context of gradient-play in two-player zero-sum games. Concurrently with this paper, for a general class of ``sufficiently smooth'' two-player, zero-sum games it was shown that there exists stationary points for gradient-play that are non-Nash \cite{Daskalakis}\footnote{This paper was under review at the time that \cite{Daskalakis} became publicly available. Our results show the existence of these non-Nash equilibria and attracting cycles in both general-sum and zero-sum games.}.
In such games, it has also been shown that gradient-play can converge to cycles (see, e.g.,~\cite{mertikopoulos:2018aa, Wesson2016, hommes:2012aa}).

There is also related work in more general games on the analysis of when Nash equilibria are attracting for gradient-based approaches (i.e. \textbf{Q2}). Sufficient conditions for this to occur are the conditions for stable differential Nash equilibria  introduced in \cite{ratliff:2013aa,ratliff:2014aa,ratliff:2016aa} and the condition for variational stability later analyzed in \cite{Mertikopoulos2019}. We remark that these conditions are equivalent for the classes of games we consider. Neither of these works give conditions under which Nash equilibria are avoided by gradient-play or comment on other attracting behaviors.

Expanding on this rich body of literature (only the most relevant of which is covered in our short review), in this paper we provide answers to \textbf{Q1} without imposing structure on the game outside regularity conditions on the cost functions by exploiting the observation that gradient-based learning dynamics are not gradient flows. We also provide answers to \textbf{Q2} by demonstrating that a non-trivial set of games admit Nash equilibria that are almost surely avoided by gradient-play. We give explicit conditions for when this occurs.
Using similar analysis tools, we also provide new insights into the behavior of gradient-based learning in structured classes of games such as zero-sum and potential games.

\subsection{Contributions and Organization}
We present a general framework for modeling competitive gradient-based learning
that applies to a broad swath of learning algorithms. 
In Section~\ref{sec:connections}, we draw connections between the limiting behavior of this class of algorithms and game-theoretic and dynamical systems notions of equilibria. In particular, we construct general-sum and zeros-sum games that admit non-Nash attracting equilibria of the gradient dynamics. Such points are attracting under the learning dynamics, yet at least one player---\emph{and potentially all of them}---has a direction in which they could unilaterally deviate to decrease their cost. Thus, these non-Nash equilibria are of questionable game theoretic relevance and can be seen as artifacts of the players' algorithms.

In Section~\ref{sec:results}, we show that policy gradient multi-agent reinforcement learning (MARL), generative adversarial networks (GANs), gradient-based multi-agent multi-armed bandits, among several other common multi-agent learning settings, conform to this framework.  The framework is amenable to  tools for analysis from dynamical systems theory. 

Also in Section~\ref{sec:results}, we show that a subset of the local Nash equilibria
in general-sum games and potential games is avoided almost surely when each
player employs a gradient-based algorithm. We show that this holds in two broad
settings: the full information setting when each player has oracle access to
their gradient but randomly initializes their first action, and a partial
information setting where each player has access to an unbiased estimate of
their gradient. 

Thus, we provide a negative answer to both  \textbf{Q1} and  \textbf{Q2} for $\nplayers$--player general-sum games, and highlight the nuances present in  zero-sum and potential games. We also show that the dynamics formed
        from the individual gradients of agents' costs  are \emph{not gradient
        flows}. This in turn implies that competitive gradient-based learning in general-sum games may converge to periodic orbits and other non-trivial limiting behaviors that arise in, e.g., chaotic systems.
        
To support the theoretical results, we present empirical results in Section~\ref{sec:LQR} that show that policy gradient algorithms avoid
 global Nash equilibria in a large number of linear quadratic (LQ)
dynamic games, a benchmark for MARL. 

    We conclude in Section~\ref{sec:discussion} with a discussion of the implications of our results and some links with prior work as well as some comments on future directions.  

 \section{Preliminaries}
\label{sec:prelims}

Consider $\nplayers$ agents indexed by $\mc{I}=\{1, \ldots, \nplayers\}$. Each agent $i \in
\mc{I}$ has their own decision variable $x_i \in X_i$, where $X_i$ is their
finite-dimensional strategy space of dimension $m_i$. 
Define $X=X_1\times \cdots \times X_\nplayers$ to be the finite-dimensional joint strategy
space with dimension $m=\sum_{i\in \mc{I}}m_i$. Each agent is endowed with a cost function $\f_i\in C^\dimsmth(X, \mathbb{R})$ with
 $\dimsmth\geq 2$ and such that
 $\f_i:(\pxone{i},\pxone{-i})\mapsto f_i(x_i,x_{-i})$ where we use the notation
 $x=(x_i,x_{-i})$ to make the dependence on the action of the agent $x_i$, and the actions of all agents excluding agent $i$, $\pxone{-i}=(\pxone{1}, \ldots, \pxone{i-1}, \pxone{i+1}, \ldots,
\pxone{n})$ explicit. The agents seek to minimize their own cost, but only have 
control over their own decision variable $x_i$. In this setup,
agents' costs are not necessarily aligned with one another, meaning they are competing. 

Given the game $\mc{G}=(f_1,\ldots, f_\nplayers)$, 
agents are assumed to update their strategies simultaneously according to a gradient-based learning algorithm of the form
\begin{equation}
    x_{i,t+1}=x_{i,t}-\gamma_{i,t}h_i(x_{i,t}, x_{-i,t}),
    \label{eq:gradbasedlearn}
\end{equation}
where $\gamma_{i,t}$ is agent $i$'s step-size at iteration $t$. 

We analyze the following two settings: 
\begin{enumerate}
    \item Agents have \emph{oracle access} to the 
gradient of their cost with respect to their own choice
variable---i.e.~$h_i(x_{i,t},x_{-i,t})= D_if_i(x_{i,t},x_{-i,t})$ where $D_if_i\equiv \partial f_i/\partial x_i$
denotes the derivative of $f_i$ with respect to $x_i$.
\item Agents have an \emph{unbiased estimator} of their
gradient---i.e.,~$h_i(x_{i,t},x_{-i,t})=D_if_i(x_{i,t},x_{-i,t})+w_{i,t+1}$ where
$\{w_{i,t}\}$ is a zero mean, finite variance stochastic process.
\end{enumerate}
 We
refer to the former setting as \emph{deterministic} gradient-based learning and the latter setting as \emph{stochastic} gradient-based learning. 
Assuming that all agents are employing such algorithms, we aim to analyze the limiting behavior of the agents' strategies. To do so, we leverage the following game-theoretic notion of a Nash equilibrium.

\begin{definition}
  \label{def:SLNE}
  A strategy $x\in X$ is a {local Nash equilibrium}
  for the game $(f_1, \ldots, f_\nplayers)$ if, for each $i\in\mc{I}$, there exists
  an open set $W_i\subset X_i$ such that
  that $\pxone{i}\in W_i$ and 
  $f_i(x_i,x_{-i})\leq f_i(x_i',x_{-i})$
  for all $\pxone{i}'\in W_i$.
    If the above inequalities are strict, then we say
    $x$ is a {strict local Nash equilibrium}.  
\end{definition}

The focus on \emph{local} Nash equilibria is due to our lack of assumptions on the agents' cost functions. If $W_i=X_i$ for each $i$, then a local Nash equilibrium $x$ is a {global Nash equilibrium}. This holds in e.g the bimatrix games and the linear quadratic games we analyze in Section~\ref{sec:LQR}. Depending on the agents' costs, a game $(f_1, \ldots, f_\nplayers)$ may admit anywhere from one to a continuum of local or global Nash equilibria; or none at all.

 \section{Linking Games and Dynamical Systems}
\label{sec:connections}
   In this section, we draw links between the limiting behavior of dynamical
   systems and game-theoretic notions of equilibria in three broad classes of continuous games. For brevity, the proofs of the propositions in this section are supplied in Appendix~\ref{app:proofs}. A high-level summary of the links we draw is shown in Figure~\ref{fig:eqtype}.

Define $\omega(x)=(D_1f_1(x), \ldots, D_\nplayers f_\nplayers(x))$ to be the vector of player derivatives of their own cost functions with respect to their own choice variables. When each player is employing a gradient-based learning algorithm, the joint strategy of the players, (in the limit as the agents' step-sizes go to zero) follows the differential equation
\begin{align*}
    \dot x=-\omega(x).
    \label{eq:sys}
\end{align*}

A point $x\in X$ is said to be an equilibrium, critical point, or stationary point of the dynamics if $\omega(x)=0$. Stationary points of $\dot x=-\omega(x)$ are joint strategies from which, under gradient-play, the agents do not move. We note that $\omega(x)=0$ is a necessary condition for a point $x\in X$ to be a local Nash equilibrium~\cite{ratliff:2016aa}. Hence, all local Nash equilibria are critical points of the joint dynamics $\dot x=-\omega(x)$.

Central to dynamical systems theory is the study of limiting behavior and its
stability properties. A classical result in dynamical systems theory allows us to characterize the stability properties of an equilibrium $x^*$ by analyzing the Jacobian of the dynamics at $x^*$. The Jacobian of $\omega$ is defined by
\[D\omega(x)=\bmat{D_{1}^2\f_1(x) & \cdots & D_{\nplayers 1}\f_1(x)\\ \vdots & \ddots
     & \vdots \\ D_{1\nplayers}\f_\nplayers(x) & \cdots & D_{\nplayers}^2\f_\nplayers(x)}.\]
Since $D\omega$ is a matrix of second derivatives, it is sometimes referred to as the `game Hessian'. Similar to the Hessian matrix of a gradient flow, $D\omega$  allows us to further characterize the critical points of $\omega$ by their properties under the flow of $\dot x=-\omega(x)$. 
Let $\lambda_i(x)\in \spec(D\omega(x))$ for $i\in \{1, \ldots, m\}$ denote the
eigenvalues of $D\omega$ at $x$ where $\text{Re}(\lambda_1(x))\leq \cdots \leq
\text{Re}(\lambda_m(x))$---that is, $\lambda_1(x)$ is the eigenvalue with the
smallest real part. Of particular interest are asymptotically stable equilibria. 

\begin{definition}
A  point $x\in X$ is a {locally asymptotically stable equilibrium} of the continuous time dynamics $\dot x=-\omega(x)$ if $\omega(x)=0$ and $\mathrm{Re}(\lambda)>0$ for all $\lambda\in \spec(D\omega(x))$.    
\end{definition}

Locally asymptotically stable equilibria have two properties of interest. First, they are isolated, meaning that there exists a neighborhood around them in which no other equilibria exist. Second, they are exponentially attracting under the flow of $\dot x=-\omega(x)$, meaning that if agents initialize in a neighborhood of a locally asymptotically stable equilibrium $x^\ast$ and follow the dynamics described by $\dot x=-\omega(x)$, they will converge to $x^\ast$ exponentially fast \cite{sastry:1999aa}. This, in turn, implies that a discretized version of $\dot x=-\omega(x)$, namely
\begin{equation}\pxt{t+1}=\pxt{t}-\gamma\omega(\pxt{t}),\label{eq:update-x}\end{equation} converges locally for appropriately selected step size $\gamma$ at a rate of $O(1/t)$. Such results motivate the study of the continuous time
dynamical system $\dot{x}=-\omega(x)$ in order to understand convergence
properties of gradient-based learning algorithms of the form
\eqref{eq:gradbasedlearn}.

Another important class of critical points of a dynamical system are saddle points.
\begin{definition}
    A point $x\in X$ is a {saddle point} of the dynamics $\dot x=-\omega(x)$ if $\omega(x)=0$ and $\lambda_1(x)\in \spec(D\omega(x))$ is such that
    $\mathrm{Re}(\lambda_1(x))\leq 0$.     
    A saddle point such that $\mathrm{Re}(\lambda_i)<0$ for $i\in \{1, \ldots, \ell\}$ and $\mathrm{Re}(\lambda_j)>0$ for $j\in\{\ell+1, \ldots, m\}$ with $0<\ell<m$ is a {strict saddle point} of the continuous time dynamics $\dot x=-\omega(x)$.
\end{definition}

Strict saddle points are especially relevant to our analysis since their neighborhoods are characterized by stable and unstable manifolds \cite{sastry:1999aa}. When the agents evolve according to the dynamics solely on the stable manifold, they converge exponentially fast to the critical point. However, when they evolve solely on the unstable manifold, they diverge from the equilibrium exponentially fast. Agents whose strategies lie on the union of the two manifolds asymptotically avoid the equilibrium. We make use of this general fact in Section~\ref{sec:fullinfo}.

To better understand the links between the critical points of the gradient dynamics and the Nash equilibria of the game, we make use of an equivalent characterization of strict local Nash that leverages first and second order conditions on player cost functions. This makes them simpler objects to link to the various dynamical systems notions of equilibria than local Nash equilibria.

\begin{definition}[\cite{ratliff:2013aa, ratliff:2016aa}]
    A point $x\in X$ is a differential Nash equilibrium for the game
    defined by $(f_1,\ldots, f_\nplayers)$ if  $\omega(x)=0$ and $D_{i}^2f_i(x)
    \succ 0$ for each
    $i\in \mc{I}$.
\end{definition}

In \cite{ratliff:2014aa}, it was shown that local Nash equilibria are generically differential Nash equilibria where $\det(D\omega(x))\neq 0$ (i.e., $D\omega$ is non-degenerate). Thus, in the space of games where the agents' costs are at least twice differentiable, the set of games that admit local Nash equilibria that are not non-degenerate differential Nash equilibria is of measure zero \cite{ratliff:2014aa}.  In \cite{ratliff:2014aa} it was also shown that non-degenerate Nash equilibria are structurally stable, meaning that small perturbations to the agents' costs functions will not change the fundamental nature of the equilibrium. This also implies that gradient-play with slightly biased estimators of the gradient will not have vastly different behaviors in neighborhoods of equilibria.

Given these different equilibrium notions of the learning dynamics and the underlying game, let us define the following sets which will be useful in stating the results in the following sections.  For a game $\mc{G}=(f_1,\ldots, f_\nplayers)$, denote the sets of strict saddle points and locally asymptotically stable equilibria of the gradient
dynamics, $\dot x=-\omega(x)$, as $\SSP(\omega)$ and $\LASE(\omega)$, respectively, where we
recall that $\omega(x)=(D_1f_1(x), \ldots, D_\nplayers f_\nplayers(x))$. Similarly,
denote the set of local Nash equilibria, differential Nash equilibria, and
non-degenerate differential Nash equilibria of $\mc{G}$ as
$\LNE(\mc{G})$, $\DNE(\mc{G})$, and $\NDDNE(\mc{G})$,
respectively. As previously mentioned, $\NDDNE(\mc{G})=\LNE(\mc{G})$ in almost all continuous games. The key takeaways of this section are summarized in Figure~\ref{fig:eqtype}. 

\begin{figure}[h]
\center    
      \includegraphics[width=0.8\linewidth]{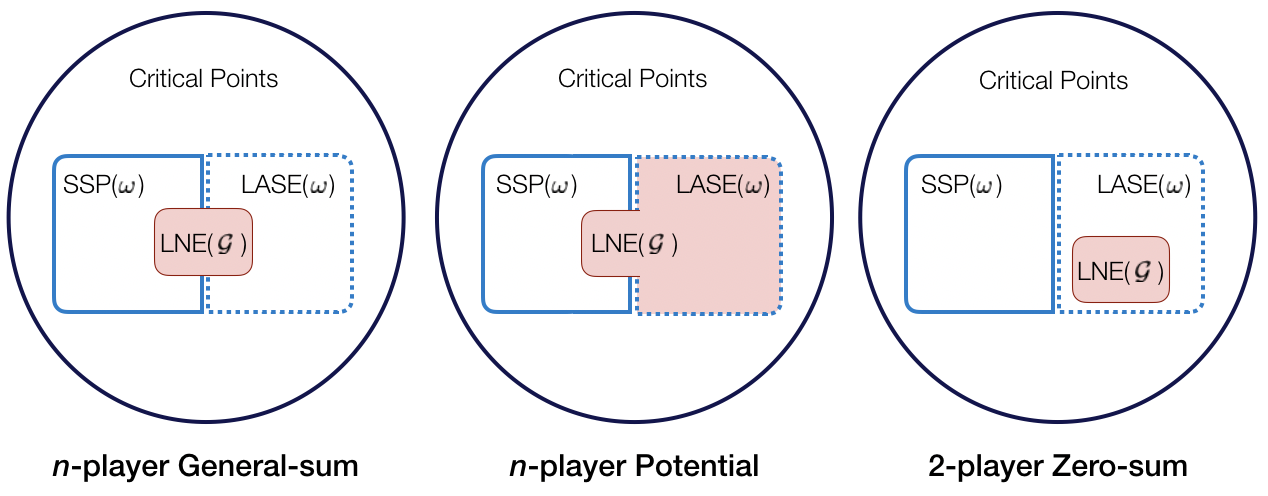}
        \caption{Links between the equilibria of generic continuous games $\mc{G}$ and their properties under the gradient dynamics $\dot x=-\omega(x)$.}
  \label{fig:eqtype}
\end{figure}

\subsection{General-sum games}
We first analyze the properties of local Nash equilibria under the joint gradient dynamics in $\nplayers$-player general-sum games.

\begin{proposition}
A non-degenerate differential Nash equilibrium is either a locally asymptotically stable equilibrium or a strict saddle point of $\dot x=-\omega(x)$---i.e., $\NDDNE(\mc{G})\subset \SSP(\omega)\cup \LASE(\omega)$.
\label{lem:nddne}
\end{proposition}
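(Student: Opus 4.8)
The plan is to classify a non-degenerate differential Nash equilibrium $x$ entirely through the spectrum of the game Hessian $D\omega(x)$, since the notions of locally asymptotically stable equilibrium, saddle point, and strict saddle point are all defined in terms of $\spec(D\omega(x))$. I will use the three facts built into the definition of such a point: $\omega(x)=0$, $D_i^2f_i(x)\succ 0$ for every $i\in\mc{I}$, and $\det(D\omega(x))\neq 0$. Concretely, it suffices to show that the eigenvalues of $D\omega(x)$ either all have strictly positive real part (giving a $\LASE$), or else split into a nonempty group with strictly negative real part and a nonempty group with strictly positive real part, with none on the imaginary axis (giving a strict saddle).

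First I would observe that non-degeneracy is exactly the statement $0\notin\spec(D\omega(x))$, so no eigenvalue vanishes. Then I would exploit the block structure: the diagonal blocks of $D\omega(x)$ are precisely the matrices $D_i^2f_i(x)$, each symmetric positive definite and hence of strictly positive trace, so $\tr(D\omega(x))=\sum_{i\in\mc{I}}\tr(D_i^2f_i(x))>0$. Since $D\omega(x)$ is real, its trace equals $\sum_{j=1}^m\mathrm{Re}(\lambda_j(x))$, so at least one eigenvalue has strictly positive real part. This yields a dichotomy: if $\mathrm{Re}(\lambda_j(x))>0$ for all $j$ then $x\in\LASE(\omega)$ by definition; otherwise $\mathrm{Re}(\lambda_1(x))\leq 0$, and I would aim to conclude that $x$ is a strict saddle point.

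The step I expect to be the main obstacle is closing that second case: non-degeneracy removes the eigenvalue $0$ but not a purely imaginary pair $\pm i\theta$ with $\theta\neq 0$, so to match the definition of a strict saddle I need the stronger conclusion $\mathrm{Re}(\lambda_1(x))<0$, i.e.\ hyperbolicity of $D\omega(x)$. I would close this with a genericity argument, consistent with the fact that the correspondences of this section are stated for generic games: the set of cost tuples $(f_1,\ldots,f_n)$ for which $D\omega$ has a nonzero purely imaginary eigenvalue at some zero of $\omega$ is a positive-codimension (measure-zero, meager) subset of the space of costs---a transversality statement about the dependence of the characteristic polynomial of $D\omega$ on the jets of the $f_i$ at a critical point---which is the same mechanism behind the structural stability of non-degenerate differential Nash equilibria cited in the text. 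Granting hyperbolicity, let $\ell$ be the number of eigenvalues of $D\omega(x)$ with negative real part; the second case gives $\ell\geq 1$, while the trace computation gives $m-\ell\geq 1$, so $0<\ell<m$ and, after reordering the eigenvalues by real part, $x$ satisfies the definition of a strict saddle point. This establishes $\NDDNE(\mc{G})\subset\SSP(\omega)\cup\LASE(\omega)$.
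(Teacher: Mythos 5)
Your argument follows the same route as the paper's proof: the diagonal blocks $D_i^2f_i(x)$ of $D\omega(x)$ are positive definite, so $\tr(D\omega(x))=\sum_{i}\tr(D_i^2f_i(x))>0$; since the trace of a real matrix equals the sum of the real parts of its eigenvalues, at least one eigenvalue has strictly positive real part; non-degeneracy excludes the eigenvalue $0$; and the dichotomy between $\LASE(\omega)$ and $\SSP(\omega)$ is then read off the spectrum. (The second half of the paper's proof, which exhibits a two-player quadratic game whose non-degenerate differential Nash equilibrium is a strict saddle, only shows that the $\SSP(\omega)$ branch is non-vacuous and is not needed for the inclusion itself.)

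The one place you diverge is precisely the step you flag as the obstacle, and you are right to flag it. The paper's proof asserts that $\det(D\omega(x))\neq 0$ implies no eigenvalue has zero real part; this is not correct, since non-degeneracy removes the eigenvalue $0$ but not a conjugate pair $\pm i\theta$ with $\theta\neq 0$. A spectrum such as $\{2,\, i,\, -i\}$ is realizable at a non-degenerate differential Nash equilibrium (e.g.\ a three-player quadratic game with scalar strategies and all diagonal entries of $D\omega$ equal to $2/3$: the off-diagonal entries $D_{ji}f_i$ are unconstrained, so the characteristic polynomial $(\lambda-2)(\lambda^2+1)$ can be achieved), and such a point lies in neither $\LASE(\omega)$ nor $\SSP(\omega)$ as those sets are defined. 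Your genericity patch is the honest repair and is consistent with how the paper treats non-degenerate equilibria elsewhere (generic coincidence with local Nash, structural stability), but note that a transversality argument establishes the inclusion only for a generic game, whereas the proposition is stated for every non-degenerate differential Nash equilibrium of every game; as literally stated the result needs hyperbolicity of $D\omega(x)$ as an additional (generic) hypothesis, or the saddle classification must be relaxed to admit purely imaginary eigenvalues. In short, you have not introduced a gap the paper avoids---you have made visible a gap the paper's own proof papers over, and proposed a defensible, if only sketched, way around it.
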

Locally asymptotically stable differential Nash equilibria satisfy the notion of variational stability introduced in \cite{Mertikopoulos2019}. In fact, a simple analysis shows that the definitions of variationally stable equilibria and locally asymptotically stable differential Nash equilibria~\cite{ratliff:2013aa} are equivalent in the games we consider---i.e., games where each players' cost is at least twice continuously differentiable. 
We remark that, from the definition of asymptotic stability, the gradient dynamics have a $O(1/t)$ convergence rate in the neighborhood of such equilibria.

An important point to make is that not every locally asymptotically stable equilibrium of $\dot x=-\omega(x)$ is a non-degenerate differential Nash equilibrium. 
Indeed, the following proposition provides  an entire class of games whose
corresponding gradient dynamics admit locally asymptotically stable equilibria that are not local Nash equilibria. 
\begin{proposition}
    In the class of general-sum continuous games, there exists a continuum of games containing games $\mc{G}$ such that $\LASE(\omega)\not\subset \NDDNE(\mc{G})$, and moreover, $\LASE(\omega)\not\subset\LNE(\mc{G})$.
    \label{prop:gsg}
\end{proposition}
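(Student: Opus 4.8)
The plan is to exhibit an explicit open (hence continuum-sized) family of two-player general-sum games with quadratic costs for which the origin lies in $\LASE(\omega)$ but not in $\LNE(\mc{G})$; since $\NDDNE(\mc{G})\subseteq \LNE(\mc{G})$ (a non-degenerate differential Nash is in particular a local Nash), exhibiting a point of $\LASE(\omega)\setminus\LNE(\mc{G})$ yields both claimed non-inclusions at once. The conceptual point driving the construction is exactly the asymmetry emphasized in the introduction: a diagonal block $D_i^2 f_i$ of the game Hessian can be \emph{negative} definite, so that player $i$'s cost has a strict local maximum in its own variable, while the full game Hessian $D\omega$ still has spectrum in the open right half-plane because the off-diagonal coupling dominates.

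Concretely, I would take $\nplayers = 2$, $X_1 = X_2 = \mb{R}$, and
\[
    f_1(x_1,x_2) = \tfrac12 a\, x_1^2 + b\, x_1 x_2, \qquad
    f_2(x_1,x_2) = \tfrac12 c\, x_2^2 + d\, x_1 x_2 ,
\]
so that $\omega(x) = (a x_1 + b x_2,\ d x_1 + c x_2)$ has $x^\ast = 0$ as its unique critical point whenever $ac \neq bd$, with constant game Hessian $D\omega = \bmat{a & b \\ d & c}$. For a real $2\times 2$ matrix, every eigenvalue has strictly positive real part iff $\tr(D\omega) = a + c > 0$ and $\det(D\omega) = ac - bd > 0$; under these two inequalities $x^\ast \in \LASE(\omega)$ by definition. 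Taking $a = -1$ forces $c > 1$ and $bd < -c$, and the choice $c = 3$, $b = 1$, $d = -4$ gives $D\omega = \bmat{-1 & 1 \\ -4 & 3}$, whose characteristic polynomial is $(\lambda - 1)^2$, so $\spec(D\omega) = \{1\}$ and $x^\ast \in \LASE(\omega)$. On the other hand $D_1^2 f_1(x^\ast) = a = -1 < 0$ is not positive definite, so $x^\ast \notin \NDDNE(\mc{G})$; and since $f_1(x_1, 0) = -\tfrac12 x_1^2$ has a strict local maximum at $x_1 = 0$, every open $W_1 \ni 0$ contains some $x_1'$ with $f_1(x_1', 0) < f_1(0,0)$, so the local Nash inequality fails and $x^\ast \notin \LNE(\mc{G})$.

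To pass from one game to a continuum, I would observe that the inequalities $a + c > 0$, $ac - bd > 0$, $a < 0$ cut out a nonempty open set $U \subseteq \mb{R}^4$ of coefficient vectors, and by the argument above every $(a,b,c,d) \in U$ yields a distinct game with $0 \in \LASE(\omega) \setminus \LNE(\mc{G})$; a single curve inside $U$ (say $b = 1$, $d = -4$, $c = 3$, $a \in (-2,0)$) already exhibits a continuum. To place these games inside the full space of $C^\dimsmth$ games one may append to each $f_i$ arbitrary terms vanishing to order $\geq 3$ at the origin, which alters neither $\omega(0)$ nor $D\omega(0)$; alternatively, since $\det D\omega = ac - bd \neq 0$ the critical point is non-degenerate and hence structurally stable in the sense used earlier (\cite{ratliff:2014aa}), so the whole picture---membership in $\LASE(\omega)$ and failure of the local Nash inequalities---persists under small $C^\dimsmth$ perturbations of the costs.

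The only step requiring care is the choice of coefficients in the second paragraph: $a < 0$ alone would, in a single-agent gradient flow, destabilize the equilibrium, and it is precisely the non-gradient, asymmetric nature of $-\omega$---here a coupling term with $bd$ sufficiently negative---that restores $\tr(D\omega) > 0$ and $\det(D\omega) > 0$. Verifying these two scalar inequalities, together with the elementary observation that $f_1(\cdot,0)$ is concave, is the entire content of the proof, so I do not anticipate any further obstacle.
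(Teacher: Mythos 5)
Your proof is correct and follows essentially the same route as the paper's: the paper also exhibits the two-player quadratic family $f_1=\tfrac{a}{2}x_1^2+bx_1x_2$, $f_2=\tfrac{d}{2}x_2^2+cx_1x_2$ with constant game Hessian, makes one diagonal entry negative to kill the second-order Nash condition, and imposes the trace and determinant inequalities to keep the spectrum in the open right half-plane, concluding that a continuum of coefficient choices works. Your additional remarks (the explicit numerical instance, the observation that $\NDDNE(\mc{G})\subseteq\LNE(\mc{G})$ reduces the two non-inclusions to one, and the perturbation argument for embedding into the full space of $C^\dimsmth$ games) are all sound but not needed beyond what the paper records.
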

\begin{proof}
    Consider a two player game $\mc{G}=(f_1,f_2)$ on $\mb{R}^2$ where
    \begin{align*}
        f_1(x_1,x_2)= \frac{a}{2}x^2_1 + bx_1x_2,\ \ \text{and}\ \   f_2(x_1,x_2)= \frac{d}{2}x^2_2 + cx_1x_2
    \end{align*}
    for constants $a,b,c,d \in \mb{R}$. The Jacobian of $\omega$ is given by
\begin{align}
\label{eq:domeg}
D\omega(x_1,x_2)=\bmat{a &   b \\ c & d}, \ \ \forall (x_1,x_2)\in \mb{R}^2.
\end{align}
If $a>0$ and $d<0$, then the unique stationary point $x=(0,0)$ is neither a differential Nash nor a local Nash equilibria since the necessary conditions are violated (i.e., $d<0$). However, if $a>-d$ and $ad>cb$, the eigenvalues of $D\omega$ have positive real parts and $(0,0)$ is asymptotically stable. Further, this clearly holds for a continuum of games. Thus, the set of locally asymptotically stable equilibria that are {not Nash equilibria} may be arbitrarily large. 
\end{proof}

The, preceding proposition shows that there exists attracting critical points of the gradient dynamics in general-sum continuous games that are not Nash equilibria and may not be even relevant to the game. Thus, this provides a negative answer to \textbf{Q2} (whether all attracting equilibria in general-games are game-relevant for the learning dynamics).

\begin{remark}
We note that, by definition, the non-Nash locally asymptotically stable equilibria (or non-Nash equilibria) do not satisfy the second-order conditions for Nash equilibria. Thus, at these joint strategies, at least one player -- and maybe all of them -- has a direction in which they would unilaterally deviate if they were not using gradient descent. As such, we view convergence to these points to be undesirable.
\end{remark}

\subsection{Zero-sum games}
Let us now restrict our attention to two-player zero-sum games, which often 
arise when training GANs, in adversarial learning,
and in MARL
\cite{goodfellow:2014aa,omidshafiei:2017aa,chivukula:2017aa}. In such games, one player can be seen as minimizing $f$ with respect to their decision variable and the other as minimizing $-f$ with respect to theirs. The following
proposition shows that all differential Nash equilibria in two-player zero-sum games are locally asymptotically stable equilibria under the flow of $\dot x=-\omega(x)$.

\begin{proposition}
\label{prop:zsg}
    For an arbitrary two-player zero-sum game, $(f,-f)$ on $\mb{R}^m$, if $x$ is
    a differential Nash equilibrium, then $x$ is both a non-degenerate differential Nash equilibrium and a locally asymptotically stable equilibrium of $\dot x=-\omega(x)$---that is, $\DNE(\mc{G})\equiv \NDDNE(\mc{G})\subset \LASE(\omega)$.
\end{proposition}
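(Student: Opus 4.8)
The plan is to exploit the zero-sum structure to show that the symmetric part of the game Hessian is positive definite at any differential Nash equilibrium, and then apply the elementary fact that a real matrix whose symmetric part is positive definite has all of its eigenvalues in the open right half plane.

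First I would write out $\omega$ and $D\omega$ explicitly for the game $\mc{G} = (\f, -\f)$. Here $\omega(x) = (D_1\f(x), -D_2\f(x))$, so
\[
D\omega(x) = \bmat{ D_1^2\f(x) & D_{21}\f(x) \\ -D_{12}\f(x) & -D_2^2\f(x) }.
\]
Because $\f \in C^{\dimsmth}$ with $\dimsmth \geq 2$, Schwarz's theorem gives $D_{21}\f(x) = (D_{12}\f(x))^\top$, so the two off-diagonal blocks are negated transposes of one another. Consequently the symmetric part of the game Hessian is block diagonal:
\[
S(x) := \tfrac{1}{2}\left(D\omega(x) + D\omega(x)^\top\right) = \bmat{ D_1^2\f(x) & 0 \\ 0 & -D_2^2\f(x) }.
\]

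Next I would invoke the hypothesis that $x$ is a differential Nash equilibrium of $(\f,-\f)$. By definition this means $\omega(x) = 0$ together with $D_1^2 f_1(x) \succ 0$ and $D_2^2 f_2(x) \succ 0$; since $f_1 = \f$ and $f_2 = -\f$, these read $D_1^2\f(x) \succ 0$ and $-D_2^2\f(x) \succ 0$, which is exactly $S(x) \succ 0$. I would then record (with its one-line proof) the standard linear-algebra lemma: if $A$ is a real square matrix with $A + A^\top \succ 0$, then $\mathrm{Re}(\lambda) > 0$ for every $\lambda \in \spec(A)$ --- indeed for an eigenpair $(\lambda, v)$ with $v \neq 0$ one has $\mathrm{Re}(\lambda)\|v\|^2 = \mathrm{Re}(v^\ast A v) = \tfrac{1}{2} v^\ast (A + A^\top) v > 0$. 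Applying this with $A = D\omega(x)$ shows every eigenvalue of $D\omega(x)$ has strictly positive real part, which yields both conclusions at once: first, $0 \notin \spec(D\omega(x))$, so $\det D\omega(x) \neq 0$ and $x$ is a non-degenerate differential Nash equilibrium, i.e.\ $x \in \NDDNE(\mc{G})$ (and since every differential Nash equilibrium is then automatically non-degenerate, $\DNE(\mc{G}) \equiv \NDDNE(\mc{G})$); second, since $\omega(x)=0$ as well, the positivity of all real parts is precisely the definition of a locally asymptotically stable equilibrium of $\dot x = -\omega(x)$, so $x \in \LASE(\omega)$.

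There is no serious obstacle here; the computation is short. The one place to be careful is the block/transpose bookkeeping, since the whole argument hinges on the off-diagonal blocks $D_{21}\f$ and $-D_{12}\f$ cancelling exactly under symmetrization --- this cancellation is the concrete manifestation of the zero-sum assumption, and its failure in general-sum games is exactly what permits the non-Nash locally asymptotically stable equilibria exhibited in Proposition~\ref{prop:gsg}.
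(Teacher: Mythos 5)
Your proposal is correct and follows essentially the same route as the paper's proof: the paper shows $v^TD\omega(x)v=v_1^TD_1^2f(x)v_1-v_2^TD_2^2f(x)v_2>0$ for all nonzero real $v$ (i.e., precisely that the symmetric part is positive definite, with the off-diagonal blocks cancelling because $D_{21}f=(D_{12}f)^\top$), and then concludes non-degeneracy and asymptotic stability. Your version merely makes explicit the standard lemma that a positive definite symmetric part forces all eigenvalues into the open right half plane, a step the paper leaves implicit.
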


This result  guarantees that the differential Nash equilibria of zero-sum games are isolated and exponentially attracting under the flow of $\dot x=-\omega(x)$. This in turn guarantees that simultaneous gradient-play has a local linear rate of convergence to all local Nash equilibria in all zero-sum continuous games. Thus, the answer to \textbf{Q1} is the context of zero-sum games is ``yes'', since all Nash equilibria are attracting for the gradient dynamics.

The converse of the preceding proposition, however, is not
true. Not every locally asymptotically stable equilibrium in two-player zero-sum games are non-degenerate differential Nash equilibria. Indeed, there may
be many locally asymptotically stable equilibria in a zero-sum game that are not local Nash equilibria. 
The following proposition highlights this fact. 
\begin{proposition}
In the class of zero-sum continuous games, there exists a continuum of games  such that for each game $\mc{G}$, $\LASE(\omega)\not\subset \DNE(\mc{G})\subset \LNE(\mc{G})$.
\label{prop:zsg2}
\end{proposition}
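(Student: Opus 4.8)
The plan is to construct an explicit parametrized family of two-player zero-sum games on $\mb{R}^2$, mirroring the construction in the proof of Proposition \ref{prop:gsg}, and exhibit a stationary point that is locally asymptotically stable for $\dot x = -\omega(x)$ but fails the second-order (and hence first-order Nash) conditions. Concretely, I would take $f(x_1,x_2) = \tfrac{a}{2}x_1^2 - \tfrac{d}{2}x_2^2 + b x_1 x_2$ so that the game is $(f,-f)$ with player 1 minimizing over $x_1$ and player 2 minimizing $-f$ over $x_2$. Then $\omega(x) = (D_1 f, -D_2 f) = (a x_1 + b x_2,\ d x_2 - b x_1)$, and the game Hessian is the constant matrix
\begin{align*}
D\omega = \bmat{a & b \\ -b & d}.
\end{align*}
The unique stationary point is $x = (0,0)$.

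Next I would identify the parameter regime witnessing the claim. For $(0,0)$ to be locally asymptotically stable we need both eigenvalues of $D\omega$ to have positive real part, i.e. $\tr(D\omega) = a + d > 0$ and $\det(D\omega) = ad + b^2 > 0$; note that because of the skew term $-b$, the determinant picks up $+b^2$ rather than $-b^2$, so a large $|b|$ actually \emph{helps} stability. For the point to fail to be a differential (hence local) Nash equilibrium, I want the second-order condition for one of the players to be violated: player 1's own-Hessian is $D_1^2 f = a$ and player 2's is $D_2^2(-f) = d$, so it suffices to take $d < 0$ (player 2's cost $-f$ is not locally minimized in $x_2$). Choosing, say, $a$ large and positive, $d$ small and negative with $a + d > 0$, and $b$ with $b^2 > -ad = |ad|$, all three inequalities hold simultaneously. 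Since $(0,0)$ is then not a local Nash equilibrium, $\LASE(\omega) \ni (0,0) \notin \LNE(\mc{G}) \supset \DNE(\mc{G})$, giving the strict non-inclusion $\LASE(\omega)\not\subset \DNE(\mc{G})$; the inclusion $\DNE(\mc{G})\subset\LNE(\mc{G})$ is immediate from the definitions (a differential Nash equilibrium satisfies the first- and second-order sufficient conditions for a strict local Nash equilibrium). Finally, observing that the inequalities $a+d>0$, $ad+b^2>0$, $d<0$ carve out an open set in $(a,b,d)$-space establishes that a continuum of such games exists.

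I do not anticipate a serious obstacle here — the argument is a direct linear-algebra computation on a $2\times 2$ constant Jacobian — but the one point to be careful about is the sign bookkeeping in $D\omega$ for the zero-sum case: the off-diagonal block $D_{21}(-f) = -b$ carries the minus sign coming from player 2 minimizing $-f$, and it is precisely this asymmetry (real, non-symmetric $D\omega$ with a skew component) that makes the determinant $ad + b^2$ and thereby allows stability to coexist with $d<0$. It is worth stating explicitly why this cannot be fixed by symmetrizing: the relevant stability criterion is on the eigenvalues of $D\omega$ itself, not of its symmetric part, which is exactly the phenomenon the paper is emphasizing. One should also double check that $(0,0)$ is genuinely the unique critical point (it is, since $D\omega$ is invertible when $ad + b^2 \neq 0$), so there is no ambiguity about which point the claim refers to.
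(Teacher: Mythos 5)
Your construction is, up to the reparametrization $d=-c$, exactly the paper's: the same quadratic zero-sum family $f=\tfrac{a}{2}x_1^2+bx_1x_2+\tfrac{c}{2}x_2^2$ with constant game Jacobian $\bigl(\begin{smallmatrix} a & b\\ -b & -c\end{smallmatrix}\bigr)$, and your conditions $a+d>0$, $ad+b^2>0$, $d<0$ are precisely the paper's $a>c>0$, $b^2>ac$. The proof is correct and follows essentially the same route, with your remarks on the skew off-diagonal sign and the openness of the parameter region being accurate elaborations of the same argument.
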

\begin{proof}
    Consider the two-player zero-sum game $(\f, -\f)$ on $\mb{R}^2$ where
    \begin{align*}
        f(x_1,x_2)= \frac{a}{2}x^2_1 + bx_1x_2+\frac{c}{2}x^2_2;
    \end{align*}
    and $a,b,c \in \mb{R}$. The Jacobian of $\omega$ is given by
\[D\omega(x_1,x_2)=\bmat{\ \ a &\ \  b \\ -b & -c}, \ \ \forall \ (x_1,x_2)\in
\mb{R}^2.\]
If $a>c>0$ and $b^2>ac$, then $D\omega(x_1,x_2)$ has eigenvalues with strictly positive real part, but the unique stationary point is not a differential Nash equilibrium---since $-c<0$---and, in fact, is not even a Nash equilibrium. Indeed,
\begin{equation*} -f(0,0)>-f(0,x_2)=-\frac{c}{2}x_2^2, \ \ \forall\ x_2\neq 0.\end{equation*}
Thus, there exists a continuum of zero-sum games with a large set of locally asymptotically stable equilibria of the corresponding dynamics $\dot{x}=-\omega(x)$ that are not differential Nash.
\end{proof}

The, preceding proposition again shows that there exists non-Nash equilibria of the gradient dynamics in zero-sum continuous games. Thus, this proposition also provides a negative answer to \textbf{Q2} in the context of zero-sum games.

\subsection{Potential Games}
One last set of games with interesting connections between the Nash equilibria and the critical points of the gradient dynamics is the class known as \emph{potential games}. This particularly nice class of games are ones
for which $\omega$ corresponds to a gradient flow under a coordinate
transformation---that is, there exists a function $\phi$ (commonly referred to as the potential function) such that for each
$i\in\mc{I}$, $D_if_i\equiv
D_i\phi$. We remark that due to the equivalence this class of games is sometimes referred to as an \emph{exact} potential game. Note that a necessary and sufficient condition for
$(f_1,\ldots, f_\nplayers)$ to be a potential game is that $D\omega$ is
\emph{symmetric}~\cite{monderer:1996aa}---that is, $D_{ij}f_j\equiv
D_{ji}f_i$. This gives potential games the desirable property that the only locally asymptotically stable equilibria of the gradient dynamics  are local Nash equilibria.  

\begin{proposition}
    \label{prop:potentialgame}
    For an arbitrary potential game, $\mc{G}=(f_1,\ldots,f_\nplayers)$ on $\mb{R}^m$, if $x$ is
    a locally asymptotically stable equilibrium of $\dot x=-\omega(x)$ (i.e., $x\in\LASE(\omega)$), then $x$ is a non-degenerate differential Nash equilibrium (i.e., $x\in \NDDNE(\mc{G})$).
\end{proposition}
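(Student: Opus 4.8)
The plan is to exploit the symmetry of the game Hessian in a potential game to upgrade the eigenvalue condition defining $\LASE(\omega)$ into an honest positive-definiteness statement, and then read off the differential Nash conditions block by block. First I would invoke the characterization recalled just before the proposition: $\mc{G}=(f_1,\ldots,f_\nplayers)$ is a potential game if and only if $D\omega(x)$ is symmetric for all $x$ (equivalently $D_{ij}f_j\equiv D_{ji}f_i$). So at any point we may treat $D\omega(x)$ as a real symmetric matrix.

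Next, suppose $x\in\LASE(\omega)$. By definition this means $\omega(x)=0$ and $\mathrm{Re}(\lambda)>0$ for every $\lambda\in\spec(D\omega(x))$. Since $D\omega(x)$ is real symmetric, all of its eigenvalues are real, so the condition $\mathrm{Re}(\lambda)>0$ is simply $\lambda>0$ for all $\lambda\in\spec(D\omega(x))$; together with symmetry this is exactly the statement $D\omega(x)\succ 0$. In particular $\det(D\omega(x))=\prod_i\lambda_i>0\neq 0$, so $x$ is a non-degenerate critical point.

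It then remains to extract the per-player second-order conditions. The block $D_i^2 f_i(x)$ sits on the diagonal of $D\omega(x)$ (it occupies the rows and columns indexed by player $i$'s coordinates $x_i$), hence it is a principal submatrix of $D\omega(x)$. A principal submatrix of a positive definite matrix is itself positive definite — concretely, for any nonzero $v\in X_i$, embedding $v$ as a vector in $X$ that is zero on all other coordinates gives $v^\top D_i^2f_i(x)\, v = \tilde v^\top D\omega(x)\,\tilde v>0$. Therefore $D_i^2 f_i(x)\succ 0$ for every $i\in\mc{I}$. Combined with $\omega(x)=0$, this is precisely the definition of a differential Nash equilibrium, and since $D\omega(x)$ is non-degenerate it is a non-degenerate differential Nash equilibrium, i.e. $x\in\NDDNE(\mc{G})$.

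I do not expect a real obstacle here; the only points that deserve an explicit sentence are (i) that symmetry turns "eigenvalues with positive real part" into genuine positive definiteness, and (ii) the principal-submatrix argument that passes positive definiteness of the full game Hessian down to each diagonal block $D_i^2f_i(x)$. Everything else is immediate from the definitions of $\LASE(\omega)$, differential Nash equilibrium, and non-degeneracy already set up in this section.
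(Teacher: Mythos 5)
Your proof is correct and follows essentially the same route as the paper's: both use the symmetry of $D\omega$ in a potential game to turn the locally-asymptotically-stable eigenvalue condition into genuine positive definiteness of the game Hessian, and then pass that down to the diagonal blocks $D_i^2 f_i(x)$ (the paper phrases this step via the potential function $\phi$ and Sylvester's criterion, you via the principal-submatrix fact, which amounts to the same thing). If anything, your version is slightly more careful in that it explicitly records $\det(D\omega(x))\neq 0$, which is needed for the \emph{non-degenerate} part of the conclusion and is left implicit in the paper's proof.
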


The full proof of Proposition~\ref{prop:potentialgame} is supplied in Appendix~\ref{app:proofs}. The preceding proposition rules out non-Nash locally asymptotically stable equilibria of the gradient dynamics in potential games, and implies that every local minimum of a potential game must be a local Nash equilibrium. Thus, in potential games, unlike in general-sum and zero-sum games, the answer to \textbf{Q2} is positive. However, the following proposition shows that the existence of a potential function is not enough to rule out local Nash equilibria that are saddle points of the dynamics.
\begin{proposition}
    In the class of continuous games, there exist a continuum of  potential games containing games $\mc{G}$ that admit Nash equilibria that are saddle points of the dynamics $\dot{x}=-\omega(x)$---i.e., $\exists\ \mc{G}$ such that for some $x\in \LNE(\mc{G})$, $x\in \SSP(\omega)$.
    \label{ex:potssp}
\end{proposition}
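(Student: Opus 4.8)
The plan is to exhibit an explicit two-player potential game on $\mb{R}^2$ whose unique stationary point is a differential Nash equilibrium --- hence a strict local Nash equilibrium by the first- and second-order conditions --- yet whose game Hessian $D\omega$ has an eigenvalue with negative real part, so that the point is a strict saddle of $\dot x = -\omega(x)$. The driving observation is that in a potential game $D\omega$ is symmetric, so its spectrum is real, and the differential Nash conditions only constrain the diagonal blocks $D_i^2 f_i$ to be positive definite; positive-definiteness of every diagonal block does not force the full symmetric matrix to be positive definite. Thus any nondegenerate saddle of a potential function $\phi$ at which $\partial^2\phi/\partial x_i^2 > 0$ for every $i$ produces the desired phenomenon.

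Concretely, I would take $\phi(x_1,x_2) = \tfrac12 x_1^2 + b x_1 x_2 + \tfrac12 x_2^2$ with $|b| > 1$ and set $f_1 = f_2 = \phi$, so that $(f_1,f_2)$ is an exact potential game with potential $\phi$ and $\omega(x) = \nabla\phi(x)$. The steps are then: (i) verify $\omega(0,0) = 0$ and that $(0,0)$ is the unique stationary point; (ii) check the differential Nash conditions $D_1^2 f_1(0,0) = D_2^2 f_2(0,0) = 1 > 0$, so $(0,0)\in\DNE(\mc{G})$ and in particular $(0,0)\in\LNE(\mc{G})$ --- one can also directly confirm that $f_i(\cdot,x_{-i}^\ast)$ is minimized at $x_i^\ast = 0$; (iii) compute the constant game Hessian $D\omega(x_1,x_2) = \bmat{1 & b \\ b & 1}$, whose eigenvalues are $1+b$ and $1-b$, so that for $b > 1$ one eigenvalue is positive and one is negative and, by the definition of a strict saddle point, $(0,0)\in\SSP(\omega)$; (iv) note that this is an open condition on $b$ (any $|b|>1$), and that replacing the diagonal coefficients by arbitrary positive $a,c$ with $b^2 > ac$ preserves everything, yielding a continuum of such games.

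Since the example is fully explicit and every computation is elementary, I do not anticipate a substantive obstacle; the only care needed is bookkeeping between the several equilibrium notions. One must invoke the fact (recorded earlier in the section) that a differential Nash equilibrium is a strict local Nash equilibrium in order to place $(0,0)$ in $\LNE(\mc{G})$, and one must use the precise definitions of potential game (symmetry of $D\omega$) and of strict saddle point ($\mathrm{Re}(\lambda_i) < 0$ for some but not all $i$) so that what is proved is literally $x\in\LNE(\mc{G})\cap\SSP(\omega)$. It is also worth remarking, for intuition, that this is exactly the familiar single-agent phenomenon that gradient descent avoids strict saddles of its objective from all but a measure-zero set of initializations, so that such Nash equilibria are almost surely avoided --- foreshadowing the results of the next section.
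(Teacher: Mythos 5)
Your proof is correct and is essentially identical to the paper's: the paper uses the game $(f,f)$ with $f(x_1,x_2)=\frac{a}{2}x_1^2+bx_1x_2+\frac{c}{2}x_2^2$, $a,c>0$ and $ac<b^2$, of which your $a=c=1$, $|b|>1$ construction is the special case you then generalize back to. No substantive differences.
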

\begin{proof}
    Consider the game $(\f, \f)$ on $X=\mb{R}^2$ described by
    \[ f(x_1,x_2)=\frac{a}{2}x_1^2 + bx_1x_2+\frac{c}{2}x_2^2\]
    where $a,b,d \in \mathbb{R}$. The Jacobian of $\omega$ is given by
    \[D\omega(x_1,x_2)=\bmat{a &  b \\ b & c}, \ \ \forall \ (x_1,x_2)\in
\mb{R}^2.\]
    If $a,c>0$, then $x=(0,0)$ is a local Nash equilibrium. However, if $ac<b^2$,
    $D\omega(x)$ has one positive  and one negative eigenvalue and $(0,0)$ is a saddle point of the gradient dynamics. Thus, there exists a continuum of
potential games where a large set of differential Nash equilibria are strict saddle points of 
$\dot{x}=-\omega(x)$.
\end{proof}

Proposition~\ref{ex:potssp} demonstrates a surprising fact about potential games. Even though all minimizers of the potential function must be local Nash equilibria, \emph{not all local Nash equilibria are minimizers of the potential function}. 

\subsection{Main Takeaways}

The main takeaways of this section are summarized in Figure~\ref{fig:eqtype}. We note that for zero-sum games, Proposition~\ref{prop:zsg2} shows that $\LNE(\mc{G}) \subset \LASE(\omega)$. Since the inclusion is strict, the answer to \textbf{Q2} in such games is ``no''. For general-sum games, Proposition~\ref{prop:gsg} allows us to to conclude that there do exist attracting, non-Nash equilibria. Thus, the answer to  \textbf{Q2} is also ``no''.   In potential games, since $\LASE(\omega) \subset \LNE(\mc{G})$ the answer is ``yes''.

In the following sections, we provide answers to \textbf{Q1} by showing that all local Nash equilibria in $\LNE(\mc{G}) \cap \SSP(\omega)$ are avoided almost surely by gradient-based algorithms in both the deterministic and stochastic settings. In particular, since $\LNE(\mc{G}) \cap \SSP(\omega) \ne \emptyset$ in potential and general-sum games, one cannot give a positive answer to \textbf{Q1} in either of these classes of games.

\section{Convergence of Gradient-Based Learning}
\label{sec:results}
In this section, we provide convergence and non-convergence results for gradient-based algorithms. We also include a high-level overview of well-known
algorithms that fit into the class of learning algorithms we consider; more
detail can be found in Appendix~\ref{sec:examples}.
\subsection{Deterministic Setting}
\label{sec:fullinfo}
We first address convergence to equilibria in the \emph{deterministic} setting in which agents have oracle access to their gradients at each time step. This includes 
the case where agents know their own cost functions $f_i$ and observe their own
actions as well as their competitors' actions---and hence, can compute the
gradient of their cost with respect to their own choice variable.

Since we have assumed that each agent $i\in \mc{I}$ has their own \emph{learning rate} (i.e.~step sizes
$\gamma_i$), the joint dynamics of all the players are given by
\begin{equation}
    \pxt{t+1}=g(\pxt{t})
\end{equation}
where $g:x\mapsto x-\gamma \odot \omega(x)$ with
$\gamma=(\gamma_i)_{i\in \mc{I}}$ and $\gamma>0$
element-wise. By a slight abuse of notation, 
$\gamma\odot \omega(\pxt{t})$ is defined to be element-wise multiplication of $\gamma$ and
$\omega(\cdot)$ where $\gamma_1$ is multiplied by the first $m_1$ components of
$\omega(\cdot)$, $\gamma_2$ is multiplied by the next $m_2$ components, and so on. 

We remark that this update rule immediately distinguishes gradient-based learning in games from gradient descent. By definition, the dynamics of gradient descent in single-agent settings always correspond to gradient flows ---i.e $x$ evolves according to an ordinary differential equation of the form $\dot x=-\nabla \phi(x)$ for some function $\phi:\mb{R}^d \rar \mb{R}$. Outside of the class of \emph{exact} potential games we defined in Section~\ref{sec:connections}, the dynamics of players' actions in games are not afforded this luxury---indeed, $D\omega$ is not in general symmetric (which is a necessary condition for a gradient flow). This makes the potential limiting behaviors of $\dot{x}=-\omega(x)$ highly non-trivial to characterize in general-sum games. 

The structure present in a gradient-flow implies strong properties on the limiting behaviors of $x$. In particular, it precludes the existence of limit cycles or periodic orbits (limiting behaviors of dynamical systems where the state of system cycles infinitely through a set of states with a finite period) and chaos (an attribute of nonlinear dynamical systems where the system's behavior can vary extremely due to slight changes in initial position) \cite{sastry:1999aa}.   We note that both of these behaviors can occur in the dynamics of gradient-based learning algorithms in games\footnote{The Van der Pol oscillator and Lorenz system  (see e.g \cite{sastry:1999aa}) can be seen as the resulting gradient dynamics in a 2-player and 3-player general-sum game respectively. The first is a classic example of a system where players converge to cycles and the second is an example of a chaotic system.}. 

Despite the wide breadth of behaviors that gradient dynamics can exhibit in competitive settings, we are still make statements about convergence (and non-convergence) to certain types of equilibria. To  do so,  we first make the following standard assumptions on the smoothness of the cost functions $f_i$ and the magnitude of the agents' learning rates $\gamma_i$.
\begin{assumption}
    For each $i\in \mc{I}$, $\f_i\in C^\dimsmth({X}, \mb{R})$ with $\dimsmth\geq 2$,
    $\sup_{x\in X}\|D\omega(x)\|_2\leq L<\infty$, and 
    $0<\gamma_i<1/L$ where $\|\cdot\|_2$ is the induced $2$-norm.
    \label{ass:ell}
\end{assumption}

Given these assumptions, the following result rules out converging to strict saddle points.

\begin{theorem}
    Let $\f_i:{X}\rar \mb{R}$ and $\gamma$ satisfy Assumption~\ref{ass:ell}.
    Suppose that
 ${X}=X_1\times \cdots \times X_\nplayers \subseteq\mb{R}^m$ is open and convex.
 If
    $g({X})\subset {X}$, the  set of initial conditions
    $x\in {X}$ from which competitive gradient-based learning converges to 
   strict saddle points is of measure zero.
    \label{thm:fullinfo}
\end{theorem}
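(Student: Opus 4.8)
The plan is to adapt the ``stable manifold'' argument for avoidance of strict saddles by first-order methods to the (generally non-gradient, non-symmetric) vector field $\omega$. Write the deterministic update as $x_{t+1}=g(x_t)$ with $g(x)=x-\Gamma\,\omega(x)$, where $\Gamma=\diag(\gamma_1 I_{m_1},\dots,\gamma_\nplayers I_{m_\nplayers})\succ 0$ is the block-diagonal step-size matrix. The first thing I would establish is that $g$ is a $C^{\dimsmth-1}$ diffeomorphism from $X$ onto the open set $g(X)$: by Assumption~\ref{ass:ell}, $\|\Gamma\|_2=\max_i\gamma_i<1/L$ while $\|D\omega(x)\|_2\le L$ for all $x$, so $\|\Gamma D\omega(x)\|_2<1$ everywhere; hence $Dg(x)=I-\Gamma D\omega(x)$ is invertible at every $x$, so $g$ is an open local diffeomorphism, and since $\omega$ is $L$-Lipschitz one has $\|g(x)-g(y)\|_2\ge(1-\|\Gamma\|_2 L)\|x-y\|_2>0$ for $x\neq y$, so $g$ is injective. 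Thus $g^{-1}$ is $C^{\dimsmth-1}$ on $g(X)$ and every preimage $g^{-N}(\cdot)$ sends Lebesgue-null sets to Lebesgue-null sets; the hypotheses $g(X)\subset X$ and $X$ open guarantee the forward orbit is well defined and that $g$ maps $X$ onto an open subset of itself.

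Next comes the crux: I would show every strict saddle $x^\ast\in\SSP(\omega)$ is a linearly unstable fixed point of $g$, i.e. $Dg(x^\ast)=I-\Gamma D\omega(x^\ast)$ has an eigenvalue of modulus strictly greater than $1$. Write $A=D\omega(x^\ast)$; by definition $\omega(x^\ast)=0$, $A$ has no eigenvalue on the imaginary axis (so $A$ is nonsingular), and $A$ has at least one eigenvalue with negative real part. Since $\Gamma A$ is similar to $B:=\Gamma^{1/2}A\Gamma^{1/2}$ (conjugate by $\Gamma^{1/2}$), $\spec(Dg(x^\ast))=\{\,1-\mu:\mu\in\spec(B)\,\}$, and the elementary identity $|1-\mu|^2-1=|\mu|^2-2\,\mathrm{Re}(\mu)$ shows that any \emph{nonzero} $\mu$ with $\mathrm{Re}(\mu)\le 0$ already yields $|1-\mu|>1$. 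Because $B$ is nonsingular, it therefore suffices to exhibit an eigenvalue of $\Gamma A$ (equivalently of $B$) with non-positive real part. When the index of the strict saddle forces $\det A<0$ (for instance $m=2$, where a strict saddle necessarily has two real eigenvalues of opposite sign) this is immediate from $\det(\Gamma A)=\det(\Gamma)\det(A)<0$. The general block-diagonally scaled, non-symmetric case is the delicate step: I would control the inertia of $\Gamma A$ by exploiting the block structure of the game Hessian---in particular positive-definiteness of the diagonal blocks $D_i^2\f_i(x^\ast)$ at the (non-degenerate differential Nash) equilibria of primary interest---and/or a homotopy in $\Gamma$ from $\gamma I$, for which the claim is trivial. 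Granting this, $Dg(x^\ast)$ has a nontrivial unstable eigenspace.

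With instability in hand, I would apply the center--stable manifold theorem for maps at each strict saddle $x^\ast$: there is a neighborhood $U_{x^\ast}$ and a $C^{\dimsmth-1}$ embedded submanifold $W^{cs}_{\mathrm{loc}}(x^\ast)\subset U_{x^\ast}$ of codimension at least one (its dimension is $m$ minus the number of eigenvalues of $Dg(x^\ast)$ of modulus $>1$) such that any forward $g$-orbit remaining in $U_{x^\ast}$ is contained in $W^{cs}_{\mathrm{loc}}(x^\ast)$; being a proper embedded submanifold, $W^{cs}_{\mathrm{loc}}(x^\ast)$ is Lebesgue-null. Since $A=D\omega(x^\ast)$ is nonsingular at a strict saddle, the inverse function theorem applied to $\omega$ (which is $C^{\dimsmth-1}$ with $\dimsmth\ge 2$) shows strict saddles are isolated, hence form a countable set $\{x^\ast_j\}_j$. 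If $x_t=g^t(x_0)\to x^\ast_j$, then $g^t(x_0)\in U_{x^\ast_j}$ for all $t\ge N$ for some $N$, whence $g^N(x_0)\in W^{cs}_{\mathrm{loc}}(x^\ast_j)$, i.e. $x_0\in\bigcup_{N\ge 0}g^{-N}\big(W^{cs}_{\mathrm{loc}}(x^\ast_j)\big)$. Each set in this union is Lebesgue-null because $g$ is a diffeomorphism onto its image, so the set of initial conditions whose orbit converges to some strict saddle is contained in the countable union $\bigcup_j\bigcup_{N\ge 0}g^{-N}\big(W^{cs}_{\mathrm{loc}}(x^\ast_j)\big)$ and hence has measure zero.

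The main obstacle is the spectral claim in the second paragraph---that the block-diagonally scaled Jacobian $I-\Gamma D\omega(x^\ast)$ is non-contracting at strict saddles. For a common step size this is immediate, and the low-dimensional cases reduce to sign-of-determinant computations, but establishing it for arbitrary per-player step sizes and a non-symmetric game Hessian is the technical heart of the argument; everything else is the now-standard stable-manifold bookkeeping (diffeomorphism property, center--stable manifold, countability of strict saddles, and a countable union of null sets).
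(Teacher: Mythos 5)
Your proposal follows the same route as the paper's proof: establish that $g=\mathrm{id}-\Gamma\omega$ is a diffeomorphism (injectivity from the Lipschitz bound $\|\Gamma\|_2L<1$, local invertibility from $\rho(\Gamma D\omega(x))<1$), apply the center--stable manifold theorem at each strict saddle to obtain a Lebesgue-null local center--stable set, and conclude by taking a countable union of null preimages under $g$. The only structural difference is minor: the paper obtains countability via Lindel\"of's lemma applied to the cover $\{B_p\}$ of the critical set, while you obtain it from the isolation of strict saddles (nonsingularity of $D\omega$ there plus the inverse function theorem); both work.

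The step you flag as ``the technical heart''---that at a strict saddle $x^\ast$ the matrix $I-\Gamma D\omega(x^\ast)$ has an eigenvalue of modulus strictly greater than one when the $\gamma_i$ differ across players---is exactly the step the paper does not prove: its argument simply asserts that ``since $p_i$ is a strict saddle, $I-\Gamma D\omega(p_i)$ has an eigenvalue greater than $1$.'' Your reduction is correct as far as it goes ($\Gamma A$ is similar to $\Gamma^{1/2}A\Gamma^{1/2}$, and $|1-\mu|^2-1=|\mu|^2-2\,\mathrm{Re}(\mu)$ shows a nonzero eigenvalue of $\Gamma A$ with nonpositive real part suffices), and it closes the case of a common step size, where the eigenvalues of $\Gamma A$ are $\gamma\lambda$ and $\mathrm{Re}(\lambda)<0$ gives $|1-\gamma\lambda|>1$ immediately, as well as the case $\det D\omega(x^\ast)<0$ (in particular $m=2$). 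But for a genuinely non-symmetric $D\omega(x^\ast)$ and heterogeneous block-diagonal $\Gamma$, positive diagonal rescaling need not preserve the signs of the real parts of the eigenvalues: congruence arguments fail without symmetry, and diagonal-stabilizability results in the Fisher--Fuller vein show that some matrices with left-half-plane eigenvalues become positive stable after a suitable positive diagonal scaling. So neither your homotopy sketch nor the paper's bare assertion settles this point. In short, your proof is essentially the paper's proof, with the one genuinely delicate spectral claim honestly left open rather than silently assumed; to make the argument fully rigorous you should either restrict to $\gamma_i\equiv\gamma$ or supply the missing eigenvalue argument for block-diagonal $\Gamma$.
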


We remark that the above theorem holds for $X=X_1\times \cdots \times
X_\nplayers=\mb{R}^m$ in particular, since $g(X)\subset X$ holds trivially in this case.
It is also important to note that, as we point out in
Section~\ref{sec:connections}, local Nash equilibria can be strict saddle
points. Thus, all local Nash equilibria that are strict saddle points for $\dot{x}=-\omega(x)$ are avoided almost surely by gradient-play even with oracle gradient access and random initializations. This holds even when players randomly initialize uniformly in an arbitrarily small ball around such Nash equilibria. In Section~\ref{sec:LQR}, we show that many linear quadratic dynamic games have a strict saddle point as their global Nash equilibrium. For brevity, we provide the proof of Theorem~\ref{thm:fullinfo} in Appendix~\ref{app:proofs}, and provide a proof sketch below.

\begin{proof}[Proof sketch of Theorem~\ref{thm:fullinfo}]
The core of the proof is the celebrated stable manifold theorem from dynamical systems
theory, presented in Theorem~\ref{thm:centerstable}. We construct the set of initial positions from which gradient-play will converge to strict saddle points and then use the stable manifold theorem to show that the set must have measure zero in the players' joint strategy space. Therefore, with a random initialization players will never evolve solely on the stable manifold of strict saddles and they will consequently diverge from such equilibria. 

To be able to invoke the stable manifold theorem, we first show that
the mapping $g: \mb{R}^m\rar \mb{R}^m$ is a diffeomorphism, which is non-trivial due to the fact that we have allowed each agent to have their own learning rate 
$\gamma_i$ and $D\omega$ is not symmetric. We then iteratively construct the set
of initializations that will converge to strict saddle points under the game
dynamics. By the stable manifold theorem, and the fact that $g$ is a
diffeomorphism, the stable manifold of a strict saddle point must be measure
zero. Then, by induction we show that the set of all initial points that converge to a strict saddle point must also be measure zero.
\end{proof}

In potential games we can strengthen the above non-convergence result and give convergence guarantees. 
\begin{corollary}
    Consider a potential game $(f_1,\ldots, f_\nplayers)$ on open, convex $X=X_1\times \cdots \times
    X_\nplayers\subseteq \mb{R}^m$ and where
 each $f_i\in C^\dimsmth(X, \mb{R})$ for $\dimsmth\geq 3$. 
    Let $\nu$ be a prior measure with support $X$ which is
    absolutely continuous with respect to the Lebesgue measure and assume
    $\lim_{t\rar \infty} g^t(x)$ exists. Then,
    under Assumption~\ref{ass:ell},  competitive gradient-based learning converges 
     to 
     non-degenerate differential
    Nash equilibria almost surely. Moreover, the non-degenerate differential
    Nash to which it converges is generically a local Nash equilibrium.
    \label{cor:msfinite}
\end{corollary}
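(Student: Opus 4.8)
The plan is to combine Theorem~\ref{thm:fullinfo} --- which already excludes strict saddles from a Lebesgue-null set of initializations --- with the extra structure of potential games (symmetry of $D\omega$, together with the inclusion $\LASE(\omega)\subset\NDDNE(\mc{G})$ from Proposition~\ref{prop:potentialgame}), and to close the gap between ``not a strict saddle'' and ``non-degenerate differential Nash'' by a genericity argument ruling out degenerate critical points.

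First I would observe that whenever $x^\ast:=\lim_{t\to\infty}g^t(x)$ exists it is a critical point: by continuity of $g$, $g(x^\ast)=\lim_t g^{t+1}(x)=x^\ast$, and since $\gamma>0$ elementwise this is equivalent to $\omega(x^\ast)=0$. So $\nu$-almost surely the iterates converge to a zero of $\omega$, and it remains to classify which zero. By Theorem~\ref{thm:fullinfo} the set of $x$ for which $x^\ast$ is a strict saddle is Lebesgue-null, hence $\nu$-null because $\nu\ll\mathrm{Leb}$. A completely unstable critical point --- one at which every eigenvalue of $Dg(x^\ast)$ lies outside the closed unit disk --- can only be reached from $\bigcup_{t\ge 0}g^{-t}(\{x^\ast\})$, which is countable because $g$ is a diffeomorphism (established inside the proof of Theorem~\ref{thm:fullinfo}); equivalently, the device of Theorem~\ref{thm:centerstable} applies with trivial center-stable subspace. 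Finally, since $D_if_i\equiv D_i\phi$ for the potential $\phi$, zeros of $\omega$ are critical points of $\phi$ and $D\omega=\nabla^2\phi$; the smoothness hypothesis $s\ge 3$ lets me invoke the transversality arguments underlying~\cite{ratliff:2014aa} (equivalently, genericity of Morse functions, which persists under perturbing every $f_i$ by the same $\varepsilon\psi$) to conclude that, generically in $\mc{G}$, $\det D\omega(x^\ast)\ne 0$.

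With these reductions the identification is immediate. In a potential game $D\omega$ is symmetric, so $D\omega(x^\ast)$ has real spectrum; $\nu$-almost surely $x^\ast$ is nondegenerate, is not a strict saddle (no two eigenvalues of opposite sign) and is not completely unstable (not all eigenvalues negative), which forces every eigenvalue of $D\omega(x^\ast)$ to be strictly positive. Hence $\mathrm{Re}(\lambda)>0$ for all $\lambda\in\spec(D\omega(x^\ast))$, so $x^\ast$ is a locally asymptotically stable equilibrium of $\dot x=-\omega(x)$, and Proposition~\ref{prop:potentialgame} gives $x^\ast\in\NDDNE(\mc{G})$, proving the first claim. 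For the ``moreover'', $D\omega(x^\ast)\succ 0$ forces each diagonal block $D_i^2f_i(x^\ast)\succ 0$; combined with $D_if_i(x^\ast)=0$, the second-order sufficient conditions make $x_i^\ast$ a strict local minimizer of $f_i(\cdot,x_{-i}^\ast)$, so $x^\ast\in\LNE(\mc{G})$ --- matching the recorded fact that $\NDDNE(\mc{G})=\LNE(\mc{G})$ for generic games.

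The main obstacle is exactly the degenerate case: Theorem~\ref{thm:fullinfo} by itself does not suffice, since a degenerate critical point of $\phi$ (for instance a flat-bottomed local minimum, which is not a differential Nash equilibrium) can attract gradient descent on a positive-measure basin, so the genericity step that removes zero eigenvalues of $D\omega$ at the limit is doing real work and is where one must be careful; one also has to handle completely unstable points separately, since the paper's definition of strict saddle excludes them. Secondary care is needed to make the genericity precise within the parametrization of potential games by their cost tuples $(f_1,\dots,f_\nplayers)$ and to verify measurability and Lebesgue-nullity of the relevant initialization sets (which we inherit from Theorem~\ref{thm:fullinfo}).
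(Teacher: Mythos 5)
Your proof follows the same skeleton as the paper's: reduce to gradient descent on the potential $\phi$ via symmetry of $D\omega$, invoke Theorem~\ref{thm:fullinfo} to discard strict-saddle limits on a $\nu$-null set of initializations, and then use Proposition~\ref{prop:potentialgame} to identify the surviving limits with non-degenerate differential Nash equilibria. The differences are in the bookkeeping, and they are to your credit. The paper's proof passes directly from ``strict saddles are avoided almost surely'' to ``the limit is a non-degenerate differential Nash equilibrium,'' silently assuming that every critical point of $\phi$ that is not a strict saddle of $\dot x=-\omega(x)$ is locally asymptotically stable. You correctly isolate the two cases this skips: completely unstable points, which you dispose of with the trivial center-stable manifold / countable-preimage argument already implicit in the proof of Theorem~\ref{thm:fullinfo}, and, more importantly, degenerate critical points. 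The latter is a genuine gap in the paper's own argument: a potential such as $\phi(x)=\|x\|^4$ (restricted to a bounded open convex $X$ so that Assumption~\ref{ass:ell} holds) gives a potential game whose unique critical point attracts a positive-measure basin yet has $D\omega(0)=0$, so the limit is not a differential Nash equilibrium at all and the corollary as literally stated fails. Your genericity step ($\det D\omega(x^\ast)\neq 0$ for generic cost tuples) repairs this at the price of weakening the quantifier from ``all'' to ``generic'' potential games---a restriction the paper does not state but implicitly needs, exactly parallel to the strict-saddle-property hypothesis in the single-agent literature. The one caveat is that this genericity is over games rather than over initializations, so it cannot be absorbed into the prior $\nu$ and must be stated as an additional hypothesis; you flag this yourself, and it is the right thing to be worried about. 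Your direct verification of the ``moreover'' clause (positive definiteness of the diagonal blocks of a symmetric positive definite $D\omega(x^\ast)$) is also cleaner than the paper's appeal to genericity of local Nash among non-degenerate differential Nash.
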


Corollary~\ref{cor:msfinite} guarantees that in potential games, gradient-play will converge to a differential Nash equilibrium. Combining this with Theorem~\ref{sec:fullinfo} guarantees that the differential Nash equilibrium it converges to is a local minimizer of the potential function. A simple implication of this result is that gradient-based learning in potential games cannot exhibit limit cycles or chaos. 

Of note is the fact that the agents \emph{do not} need to be performing gradient-based learning on $\phi$ to converge to Nash almost surely. That is, they do not need to know the function $\phi$; they simply need to follow the derivative of their own cost with respect to their own choice variable, and they are guaranteed to converge to a local Nash equilibrium that is a local minimizer of the potential function. 

We note that convergence to Nash equilibria is a known characteristic of gradient-play in potential games. However, our analysis also highlights that gradient-play will avoid a subset of the Nash equilibria of the game. This is surprising given the particularly strong structural properties of such games. The proof for Corollary~\ref{cor:msfinite} is provided in Appendix~\ref{app:proofs} and follows from Proposition~\ref{prop:potentialgame}, Theorem~\ref{thm:fullinfo}, and the fact that $D\omega$ is symmetric in potential games.

\subsubsection{Implications and Interpretation of Convergence Analysis}
\label{sec:implications}
Both Theorem~\ref{thm:fullinfo} and Corollary~\ref{cor:msfinite} show that gradient-play in multi-agent settings avoids strict saddles almost surely even in the deterministic setting. Combined with the analysis in Section~\ref{sec:connections} which shows that (local) Nash equilibria can be strict saddles of the dynamics for  general-sum games, this implies that a subset of the Nash equilibria are almost surely avoided by individual gradient-play, a potentially undesirable outcome in view of \textbf{Q1} (whether all Nash equilibria are attracting for the learning dynamics).
In Section~\ref{sec:LQR}, we show that the global Nash equilibrium is a saddle point of the gradient dynamics in a large number of randomly sampled LQ dynamic games. This suggests that policy gradient algorithms may fail to converge in such games, which is highly undesired. This is in stark contrast to the single agent setting where policy gradient has been shown to converge to the unique solution of LQR problems \cite{Fazel2018GlobalCO}.  

In Section~\ref{sec:connections}, we also showed that local Nash equilibria of potential games can be strict saddles points of the potential function. Non-convergence to such points in potential games is not necessarily a bad result since this in turn implies convergence to a local minimizer of the potential function (as shown in~\cite{lee:2016aa, panageas:2016aa}) which are guaranteed to be local Nash equilibria of the game. However, these results do imply that \emph{one cannot answer ``yes'' to \textbf{Q1} in potential games} since some of the Nash equilibria are not attracting under gradient-play.

In zero-sum games, where local Nash equilibria cannot be strict saddle points of the gradient dynamics, our result  suggests that \emph{eventually} gradient-based learning algorithms will escape saddle points of the dynamics. 

The almost sure avoidance of all equilibria that are saddle points of the dynamics further implies that if \eqref{eq:sys} converges to a critical point $x$, then $x\in \LASE(\omega)$---i.e., $x$ is locally asymptotically stable for $\dot{x}=-\omega(x)$. This may not be a desired property however, since we showed in Section~\ref{sec:connections} that zero-sum and general-sum games both admit non-Nash LASE.

Since  gradient-play in games generally does not result in a gradient flow, other types of limiting behaviors such as limit cycles can occur in gradient-based learning dynamics. Theorem~\ref{thm:fullinfo} says nothing about convergence to other
limiting behaviors. In the following sections we prove that the results described in this section extend to the stochastic gradient setting. We also formally define periodic orbits in the context of dynamical systems and state stronger results on avoidance of some more complex limiting behaviors like linearly unstable limit cycles.

\ifzerosum
Let us consider another important sub-class of games, namely two-player zero-sum games, in which
agents are direct competitors.
\begin{corollary}
 Assume the conditions of Theorem~\ref{thm:fullinfo} hold.  Gradient-based
 learning algorithms for two-player zero-sum games---i.e.~$(f,-f)$---converge to local Nash
   equilibria with the strict saddle property on a set of measure zero.
    \label{cor:zerosum}
\end{corollary}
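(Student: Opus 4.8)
The plan is to obtain Corollary~\ref{cor:zerosum} as an essentially immediate consequence of Theorem~\ref{thm:fullinfo}. The key observation is set-theoretic: a local Nash equilibrium possessing the strict saddle property is, by definition, a strict saddle point of $\dot x=-\omega(x)$, so the set of such equilibria is contained in $\SSP(\omega)$. Hence any trajectory $\{g^t(x)\}$ that converges to a local Nash equilibrium with the strict saddle property in particular converges to a point of $\SSP(\omega)$, and therefore the set of initial conditions exhibiting the former behavior is a subset of the set of initial conditions exhibiting the latter.

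First I would verify that the hypotheses of Theorem~\ref{thm:fullinfo} are inherited by the zero-sum instance $(f,-f)$. Since $f\in C^{\dimsmth}(X,\mb{R})$ with $\dimsmth\geq 2$, the second player's cost $-f$ has the same regularity; the game Hessian $D\omega$ satisfies the same bound $\sup_{x\in X}\|D\omega(x)\|_2\leq L$; and the step-size condition $0<\gamma_i<1/L$ is imposed on both coordinates. Thus Assumption~\ref{ass:ell} holds, and together with ``$X=X_1\times X_2\subseteq\mb{R}^m$ open and convex'' and ``$g(X)\subset X$'' --- exactly the stated hypotheses --- Theorem~\ref{thm:fullinfo} applies.

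Then I would finish by a monotonicity argument. Let $\mu$ denote Lebesgue measure on $\mb{R}^m$, let
\[
B=\{x\in X:\ \lim_{t\to\infty}g^t(x)\ \text{exists and lies in}\ \SSP(\omega)\},
\]
and let $A\subseteq X$ be the set of initial conditions from which competitive gradient-based learning converges to a local Nash equilibrium with the strict saddle property. Theorem~\ref{thm:fullinfo} gives $\mu(B)=0$, and the first paragraph gives $A\subseteq B$, so $\mu(A)\leq\mu(B)=0$, which is the claim. There is no genuinely hard step; the only point needing a moment's care is the hypothesis transfer above, which is routine. As a complementary remark, Proposition~\ref{prop:zsg} shows that non-degenerate differential Nash equilibria of zero-sum games are locally asymptotically stable and hence never strict saddles, so any local Nash equilibrium with the strict saddle property must be degenerate; this further indicates how small the avoided set in question is in zero-sum games, though it is not needed for the measure-zero conclusion itself.
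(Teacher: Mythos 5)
Your argument is correct and matches the paper's (implicit) proof: Corollary~\ref{cor:zerosum} is presented as an immediate consequence of Theorem~\ref{thm:fullinfo}, obtained exactly by your containment $A\subseteq B$ together with monotonicity of Lebesgue measure, the hypothesis transfer to $(f,-f)$ being routine as you note. One small correction to your closing remark: at any local Nash equilibrium of a two-player zero-sum game the symmetric part of $D\omega(x)$ is positive semidefinite (since $D_1^2 f(x)\succeq 0$ and $-D_2^2 f(x)\succeq 0$), so every eigenvalue of $D\omega(x)$ has nonnegative real part and no local Nash equilibrium---degenerate or not---can be a strict saddle point; the set appearing in the corollary is therefore empty, which is consistent with the paper's later observation that local Nash equilibria cannot be strict saddle points in zero-sum games and renders the corollary vacuously true rather than merely ``small''.
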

Not all local Nash equilibria are saddle points for continuous zero-sum games;
however, a large class of these games admit saddle point equilibria. Hence, the
above results implies that for a large class of zero-sum games, local Nash
cannot be reached. 
\begin{example}
    Consider  a two player game $(\f(x_1,x_2), -\f(x_1,x_2))$
with $X_i=\mb{R}$.
The game Hessian, i.e.~$D\omega$, is of the form 
\[D\omega(x_1,x_2)=\bmat{\ \ D_{11}^2\f &\ \  D_{21}^2\f\\
    -D_{12}^2\f & -D_{22}^2f_1}=\bmat{\ \ a_{11} & \ \ a_{12}\\ -a_{12} & -a_{22}}\]
The eigenvalues of this matrix are
$\{ \frac{1}{2}(
    a_{11}-a_{22}\pm\sqrt{(a_{11}+a_{22})^2-4a_{12}^2} )
\}$
This has  equilibria with the strict saddle point property on a continuum in the
class of zero-sum games.
\end{example}
\fi

\subsection{Stochastic Setting}
\label{sec:gradientfree}

We now analyze the stochastic case in which agents are assumed to have an
unbiased estimator for their gradient. 
The results in this section allow us to extend the
results from the deterministic setting to a setting where each agent builds
an estimate of the gradient of their loss at the current set of strategies from
potentially noisy observations of the environment. Thus, 
we are able to analyze the limiting behavior of a class of commonly used machine
learning algorithms for competitive, multi-agent settings. 
In particular, we show 
that agents will almost surely not converge to strict saddle points. In Appendix~\ref{app:repel}, we show that the gradient dynamics will actually avoid more general limiting behaviors called linearly unstable cycles which we define formally. 

To perform our analysis, we make use of tools and ideas from the literature on stochastic approximations (see e.g \cite{BorkarStoch}). We note that the convergence of stochastic gradient schemes in the single-agent setting has been extensively studied \cite{robbin:1971aa,pemantle:1990aa,BottouSGD,MertReviewer}. We extend this analysis to the behavior of stochastic gradient algorithms in games.

We assume that each agent updates their strategy using the update rule
\begin{equation}
\pxtwo{i}{t+1}=x_{i,t}-\gamma_{i,t}(D_if_i(x_{i,t},x_{-i,t})+w_{i,t+1}) %
 \label{eq:sa}
\end{equation}
for some zero-mean, finite-variance stochastic process $\{w_{i,t}\}$.
Before presenting the results for the stochastic case, let us comment on the different learning algorithms that fit into this framework.

\subsubsection{Examples of Stochastic Gradient-Based Learning}
\label{subsec:gradientalgs}
\bgroup
\def\arraystretch{0.9}
{\setlength{\tabcolsep}{0.2em}    \begin{table}[t]
    \centering
    \begin{tabular}{|c||c|c|}
        \hline\textbf{Class}  &
        \textbf{Gradient Learning Rule} \\ %
        \hline\hline
         \multirow{2}{*}{Gradient-Play}  &
        \multirow{2}{*}{$x_{i}^+=x_{i}-\gamma_iD_if_i(x_{i}, x_{-i})$}\\ &  \\\hline
        \multirow{2}{*}{GANs}  &
        $\theta^{+}\ =\theta-\gamma \mb{E}[D_{\theta}L(\theta,w)]\ $\\
        &
        $w^+=w+\gamma \mb{E}[D_{w}L(\theta,w)]$ \\\hline
        \multirow{2}{*}{ MA Policy Gradient}
        &\multirow{2}{*}{$x_{i}^+=x_{i}-\gamma_i\mb{E}[{D_iJ_i}(x_{i},
        x_{-i})]$}\\  & \\ \hline    
        \multirow{2}{*}{Individual Q-learning} &
        \multirow{2}{*}{$q_i^+(u_i)=q_i(u_i)+\gamma_i(r_i(u_i,
            \pi_{-i}(q_i,q_{-i}))-q_i(u_i))$}
            \\  & \\\hline
        \multirow{1}{*}{MA Gradient Bandits}
        &$x_{i,\ell}^+=x_{i,\ell}+\gamma_i\mb{E}[\beta_iR_i(u_i,u_{-i})|u_i=\ell]$, $\ell=1,\ldots, m_i$\\ \hline
        \multirow{1}{*}{MA Experts} &  $x_{i,\ell}^+=x_{i,\ell}+\gamma_i\mb{E}[R_i(u_i,u_{-i})|u_i=\ell]$,  $\ell=1,\ldots, m_i$\\\hline
    \end{tabular}
    \caption{Example problem classes that fit into competitive gradient-based learning
    rules. Details on the derivation of these update rules as gradient-based
    learning schemes is provided in Appendix~\ref{sec:examples}. }
    \label{tab:examples}
\end{table}}

The stochastic gradient-based learning setting we study is general enough to
include a variety of commonly used multi-agent learning algorithms. The classes of algorithms we include is hardly an exhaustive list, and indeed many extensions and altogether different algorithms exist that can be considered members of this class. 
In Table~\ref{tab:examples}, we provide the gradient-based update rule for six
different example classes of learning problems: (i) gradient-play in
non-cooperative continuous games, (ii) GANs, (iii) multi-agent policy gradient, (iv) individual
Q-learning, (v) multi-agent gradient bandits, and (vi) multi-agent experts. We provide a detailed analysis of these different algorithms
including the derivation of the gradient-based update rules along with some
interesting numerical examples in
Appendix~\ref{sec:examples}. In each of these cases, one can view an agent employing the given algorithm as building an unbiased estimate of their gradient from their observation of the environment.

For example, in multi-agent policy gradient (see, e.g.,~\cite[Chapter~13]{sutton:2017aa}),
 agents' costs are defined as functions of a parameter vector $x_i$ that
parameterize their policies $\pi_{i}(x_i)$.  The parameters $x_i$ are agent
$i$'s choice variable. By following the gradient of their loss function, they aim to tune the
parameters in order to converge to an \emph{optimal} policy $\pi_i$. Perhaps surprisingly, it is not necessary for agent
$i$ to have access to $\pi_{{-i}}(x_{-i})$ or even $x_{-i}$ in order for them to construct an
unbiased estimate of the gradient of their loss with respect to their own
choice variable $x_i$ as long as they observe the sequence of actions, say
$u_{-i,t}$, of
all other agents generated. These actions are implicitly determined by the other
agents' policies $\pi_{-i}(x_{-i})(\cdot)$. Hence, in this case if agent $i$
observes
$\{(r_{j,t},u_{j,t},s_{j,t})$, $\forall \ j\in
\mc{I}\}$
where $(r_j,u_j,s_j)$ are the reward, action, and state of agent $j$, then this
is enough to construct an unbiased estimate of their gradient. 
We provide further details on multi-agent policy gradient in
Appendix~\ref{sec:examples}.

\subsubsection{Stochastic Gradient Results}
Returning to the analysis of \eqref{eq:sa},
we make the following standard assumptions on the noise processes \cite{robbin:1971aa,robbins:1985aa}.
\begin{assumption}
    The stochastic process $\{w_{i,t+1}\}$ satisfies the assumptions
$\mb{E}[w_{i,t+1}|\ \mc{F}_i^t]=0$,  $t\geq 0$
and 
$\mb{E}[\|w_{i,t+1}\|^2|\ \mc{F}_i^t]\leq \sigma^2<\infty$ a.s., 
for $t\geq 0$, where $\mc{F}_{i,t}$ is an increasing family of
$\sigma_i$-fields---i.e.~filtration, or history generated by the sequence of random
variables---given by
$\mc{F}_{i,t}=\sigma_i(\pxtwo{i}{k},w_{i,k}, k\leq t), \
t\geq 0$.
\label{ass:estmartin}
\end{assumption}
We also make new assumptions on the players' step-sizes. These are standard assumptions in the stochastic approximation literature and are needed to ensure that the noise processes are asymptotically controlled.
\begin{assumption}
    For each $i\in\mc{I}$, $f_i\in C^\dimsmth(X,\mb{R})$ with $\dimsmth\geq 2$, $D_{i}\f_i$ is
    $L_i$--Lipschitz with 
    $0<L_i<\infty$, the step-sizes satisfy $\gamma_{i,t}\equiv \gamma_t$ for all
    $i\in \mc{I}$ and 
$\sum_t \gamma_t=\infty$ and $\sum_t (\gamma_t)^2<\infty$, and 
$\sup_t\|\pxt{t}\|<\infty$ a.s. 
\label{ass:others}
\end{assumption}
Let $(a)^+=\max\{a, 0\}$ and $a\cdot b$ denotes the inner product. The following theorem extends the results of Theorem~\ref{thm:fullinfo} to the stochastic gradient dynamics in games.
\begin{theorem}
    Consider a game $(f_1,\ldots, f_\nplayers)$ on $X=X_1\times \cdots \times
    X_n=\mb{R}^m$. Suppose each agent $i\in \mc{I}$ adopts a stochastic gradient algorithm that satisfies Assumptions~\ref{ass:estmartin}
   and \ref{ass:others}. Further, suppose that for each $i\in \mc{I}$, there
   exists a constant $b_i>0$
   such that
   $\mb{E}[(w_{i,t}\cdot v)^+|\mc{F}_{i,t}]\geq b_i$
   for every unit vector $v\in \mb{R}^{m_i}$. Then,
    competitive stochastic gradient-based learning converges to strict saddle
   points of the game
   on a set of measure
   zero. 
    \label{thm:gradfree}
\end{theorem}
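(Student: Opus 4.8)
The plan is to recognize \eqref{eq:sa} as a discrete stochastic approximation of the mean-field ODE $\dot x=-\omega(x)$ and then invoke the classical non-convergence theory for stochastic approximations at linearly unstable equilibria (in the spirit of Pemantle, Brandi\`ere--Duflo, and Benaïm). First I would rewrite the joint update as $x_{t+1}=x_t+\gamma_t(F(x_t)+w_{t+1})$ with drift $F=-\omega$ and $\{w_t\}$ a martingale-difference sequence relative to the joint filtration $\mathcal F_t=\bigvee_{i}\mathcal F_{i,t}$. Assumption~\ref{ass:others} supplies the Robbins--Monro conditions $\sum_t\gamma_t=\infty$, $\sum_t\gamma_t^2<\infty$ and the a.s.\ boundedness of the iterates; Assumption~\ref{ass:estmartin} supplies $\mathbb{E}[w_{t+1}\mid\mathcal F_t]=0$ and the bounded conditional second moment; and Assumptions~\ref{ass:ell} and~\ref{ass:others} give that $F$ is $C^1$ (indeed $\omega\in C^{\dimsmth-1}$ with $\dimsmth\geq 2$) and Lipschitz on the relevant region. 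These are precisely the ingredients under which the iterates a.s.\ form an asymptotic pseudotrajectory of the flow of $\dot x=-\omega(x)$.

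Next I would observe that a strict saddle point $x^\ast$, by the definition in Section~\ref{sec:connections}, has $\omega(x^\ast)=0$ and $D\omega(x^\ast)$ with $\ell$ eigenvalues of negative real part and $m-\ell$ of positive real part, $0<\ell<m$; hence $DF(x^\ast)=-D\omega(x^\ast)$ is hyperbolic with a nontrivial unstable subspace $E^u$ of dimension $\ell$, and in particular $x^\ast$ is isolated. The goal reduces to showing $\Pr[\lim_t x_t=x^\ast]=0$ for each such $x^\ast$. Since strict saddles are isolated and $X=\mathbb{R}^m$ is second countable, the set $\SSP(\omega)$ is at most countable, so a union bound gives that the event ``$x_t$ converges to some strict saddle'' has probability zero; integrated against any initialization law absolutely continuous with respect to Lebesgue measure, this is exactly the claimed measure-zero conclusion.

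To get $\Pr[\lim_t x_t=x^\ast]=0$ I would apply the non-convergence theorem for stochastic approximation at a linearly unstable equilibrium. Beyond the hypotheses already verified, that theorem requires the noise to be uniformly exciting in the unstable directions: there is $c>0$ with $\mathbb{E}[(w_{t+1}\cdot v)^+\mid\mathcal F_t]\geq c$ for every unit vector $v\in E^u$ and all large $t$. This is the role of the extra hypothesis $\mathbb{E}[(w_{i,t}\cdot v)^+\mid\mathcal F_{i,t}]\geq b_i$ for unit $v\in\mathbb{R}^{m_i}$: writing $v=(v_1,\dots,v_n)$ in per-agent blocks, picking the block $i^\ast$ with $\|v_{i^\ast}\|\geq 1/\sqrt{n}$, and using $(a+b)^+\geq a^+-|b|$ together with the per-agent excitation bound on block $i^\ast$ and the bounded-variance control of Assumption~\ref{ass:estmartin} on the remaining blocks, one extracts a uniform positive lower bound on $\mathbb{E}[(w_{t+1}\cdot v)^+\mid\mathcal F_t]$. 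The non-convergence theorem then shows that whenever $x_t$ enters a small neighborhood of $x^\ast$, the expansion $\mathrm{Re}(\lambda)>0$ in the $E^u$-directions, combined with the ever-present noise, drives the $E^u$-component of $x_t-x^\ast$ away from $0$ with probability one, so $\{x_t\to x^\ast\}$ is a null event.

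I expect the delicate step to be exactly the translation of the per-agent excitation hypothesis into the joint excitation condition demanded by the non-convergence theorem: the noises $w_{i,\cdot}$ of different agents need not be independent nor adapted to a common filtration in the most convenient way, so one must control the off-block cross terms uniformly using only the second-moment bounds of Assumption~\ref{ass:estmartin} and argue they cannot cancel the contribution of the dominant block, and one must confirm the per-agent conditional bounds descend correctly to $\mathcal F_t$. This, rather than the ODE-method reduction or the countable union over $\SSP(\omega)$, is where the real work lies; a secondary point is that the hypotheses of the non-convergence theorem must hold globally on $\mathbb{R}^m$ rather than just locally, which is taken care of by the a.s.\ boundedness $\sup_t\|x_t\|<\infty$ assumed in Assumption~\ref{ass:others}.
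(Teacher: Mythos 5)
Your proposal follows essentially the same route as the paper: the paper's entire proof consists of verifying that the update \eqref{eq:sa} satisfies the hypotheses of Pemantle's non-convergence theorem (Theorem~\ref{thm:pementle}) and concluding $P(x_t\to p)=0$ for each linearly unstable critical point $p$, exactly as you propose, with the extra noise hypothesis playing the role of the excitation condition. You are in fact more careful than the paper on the one delicate point you identify---passing from the per-agent bounds $\mb{E}[(w_{i,t}\cdot v)^+\mid\mc{F}_{i,t}]\geq b_i$ to the joint bound on $\mb{E}[(w_{t}\cdot v)^+\mid\mc{F}_{t}]$ over unit vectors $v\in\mb{R}^m$ that Theorem~\ref{thm:pementle} actually requires---which the paper passes over in silence; be aware, though, that your proposed device $(a+b)^+\geq a^+-|b|$ combined only with the second-moment bound does not by itself yield a positive lower bound, since the cross-block term is controlled only by $O(\sigma)$, which can exceed $b_{i^\ast}/\sqrt{\nplayers}$, so closing that step requires an additional structural assumption (e.g., conditional independence across agents' noise, under which Jensen's inequality gives $\mb{E}[(w_t\cdot v)^+\mid\mc{F}_t]\geq\mb{E}[(w_{i^\ast,t}\cdot v_{i^\ast})^+\mid\mc{F}_t]$ directly).
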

The proof follows directly from showing that \eqref{eq:sa}
satisfies Theorem~\ref{thm:pementle}, provided the assumptions of the theorem hold. The assumption that $\mb{E}[(w_{i,t}\cdot v)^+|\mc{F}_{i,t}]\geq b_i$ rules out degenerate cases where the noise forces the stochastic dynamics onto the stable manifold of strict saddle points.

Theorem~\ref{thm:gradfree} implies that the dynamics of stochastic gradient-based learning defined in \eqref{eq:sa}, have the same limiting properties as the deterministic dynamics vis-\`a-vis saddle points. Thus, the implications described in Section~\ref{sec:implications} extend to the stochastic gradient setting. In particular, stochastic gradient-based algorithms will avoid a non-negligible subset of the Nash equilibria in general-sum and potential games. Further, in zero-sum and general-sum games, if the players fo converge to a critical point, that point may be a non-Nash equilibrium.

\subsubsection{Further Convergence Results for Stochastic Gradient-Play in Games} 

As we demonstrated in Section~\ref{sec:fullinfo}, outside of potential games, the dynamics of gradient-based learning algorithms in games are not gradient flows. As such, the players' actions can converge to more complex sets than simple equilibria. A particularly prominent class of limiting behaviors for dynamical systems are known as limit cycles (see e.g \cite{sastry:1999aa}). Limit cycles (or periodic orbits) are sets of states $\mc{S}$ such that each state $x \in \mc{S}$ is visited at periodic intervals \emph{ad infinitum} under the dynamics. Thus, if the gradient-based algorithms converge to a limit cycle they will cycle infinitely through the same sequence of actions. Like equilibria, limit cycles can be stable or unstable under the dynamics $\dot x=-\omega(x)$, meaning that the dynamics can either converge to or diverge from them depending on their initializations. 

We remark that the existence of oscillatory behaviors and limit cycles has been observed in the dynamics of of gradient-based learning in various settings like the training of Generative Adversarial Networks \cite{daskalakis:2017aa}, and multiplicative weights in finite action games \cite{mertikopoulos:2018aa}. We simply emphasize that the existence of such limiting behaviors is due to the fact that the dynamics are no longer gradient flows. This fact also allows for other complex limiting behaviors like chaos\footnote{A general term used to characterize dynamical systems where arbitrarily small perturbations in the initial conditions lead to drastically different solutions to the differential equations} to exist in the dynamics of gradient-based learning in games. We also show in Appendix~\ref{app:repel} that gradient-based learning  avoids some limit cycles.

In Appendix~\ref{app:repel}, we formalize the notion of a limit cycle and its stability in the stochastic setting. Using these concepts, we then provide an analogous theorem to Theorem~\ref{thm:gradfree} which states that competitive stochastic gradient-based learning converges to linearly unstable limit cycles---a parallel notion to strict saddle points but pertaining to more general limit sets---on a set of measure zero, provided that analogous assumptions to those in the statement of Theorem~\ref{thm:gradfree} hold. Providing such guarantees requires a bit more mathematical formalism, and as such we leave the details of these results to Appendix~\ref{app:ms}.

 In pursuit of a more general class of games with  desirable convergence properties, in Appendix~\ref{app:msms} we also introduce a generalization of potential games, namely Morse-Smale games, for which the combined gradient dynamics correspond to a Morse-Smale vector field~\cite{hirsch:1976aa,palis:1970aa}. In such games players are guaranteed to converge to only (linearly stable) cycles or equilibria. In such games, however, players may still converge to non-Nash equilibria and avoid a subset of the Nash equilibria.

\ifms
As we have noted, games not admitting potential functions 
may lead to limit cycles. Hence, we use the expanded theory
in~\cite{benaim:1996aa,benaim:1995aa} to show that stochastic gradient-based
learning algorithms avoid repelling sets.
To do so, we need further assumptions on our underlying space---i.e.~we need the
underlying decision spaces of each agent---i.e.~$X_i$ for each $i\in \mc{I}$---to be \emph{smooth, compact manifolds without
boundary}.
As in~\cite{benaim:1995aa}, the stochastic process $\{x_n\}$ which follows \eqref{eq:sa} is \emph{defined
on} $X$---that is, $x_n\in X$ for all $n\geq 0$. As before, it is natural to
compare sample points $\{x_n\}$ to solutions of $\dot{x}=-\omega(x)$ where we
think of \eqref{eq:sa} as a noisy approximation.  The asymptotic behavior of
$\{x_n\}$ can indeed be described by the asymptotic behavior of the flow
generated by $\omega$. 

We also need a formal notion of \emph{cycles}. A non-stationary periodic orbit of $\omega$ is called a \emph{cycle}. Let
$\xi\subset X$ be a cycle of period $T>0$. Denote by $\Phi_T$ the flow
corresponding to $\omega$. For any $x\in \xi$, $\spec(D\Phi_T(x))=\{1\}\cup
C(\xi)$ where $C(\xi)$ is the set of characteristic multipliers. We say $\xi$ is
\emph{hyperbolic} if no element of $C(\xi)$ is on the complex unit circle.
Further, if $C(\xi)$ is strictly inside the unit circle, $\xi$ is called \emph{linearly
stable} and, on the other hand, if $C(\xi)$ has at least one element on the
outside of the unit circle---that is, $D\Phi_T(x)$ for $x\in \xi$ has an
eigenvalue with real part strictly greater than $1$---then $\xi$ is called
\emph{linearly unstable}. The latter is the analog of strict saddle points in the context of periodic orbits.
We denote by $\{x_t\}$ sample paths
of the process \eqref{eq:sa} and $L(\{x_t\})$ is the \emph{limit set} of any
sequence $\{x_t\}_{t\geq 0}$ which is defined in the usual way as
all $p\in X$ such that $\lim_{k\rar \infty} x_{t_k}=p$ for some
sequence
$t_k\rar \infty$.
It was shown in~\cite{benaim:1996aa} that under less restrictive assumptions
than Assumptions~\ref{ass:estmartin} and \ref{ass:others}, $L(\{x_t\})$ is
contained in the
\emph{chain recurrent set} of $\omega$ and $L(\{x_t\})$ is a non-empty, compact
and connected set invariant under the flow of $\omega$.
\begin{theorem}
    Consider a game $(f_1,\ldots, f_n)$ where each $X_i$ is a smooth, compact
    manifold without boundary.   Suppose each agent $i\in \mc{I}$ adopts a
    stochastic gradient-based
   learning algorithm that satisfies Assumptions~\ref{ass:estmartin}
   and \ref{ass:others} and is such that sample points $x_t\in X$ for all $t\geq
   0$. Further, suppose that for each $i\in \mc{I}$, there
   exist a constant $b_i>0$ such that 
   $\mb{E}[(w_{i,t}\cdot v)^+|\mc{F}_{i,t}]\geq b_i$
   for every unit vector $v\in \mb{R}^{m_i}$. Then competitive stochastic
   gradient-based learning converges to 
  linearly unstable cycles on a set of measure
   zero---i.e.
   $P(L(x_t)=\xi)=0$
    where $\{x_t\}$ is a sample path.
    \label{thm:gradfreecycle}
\end{theorem}
As we noted, periodic orbits are not necessarily excluded from the limiting
behavior of gradient-based learning in games. 
We leave out the proof of Theorem~\ref{thm:gradfreecycle} since after some
algebraic manipulation, it is a direct application
of~\cite[Theorem~2.1]{benaim:1995aa} which is provided in
Theorem~\ref{thm:benaim} in Appendix~\ref{app:proofs}.

The above theorem guarantees that competitive stochastic gradient-based learning avoids
linearly unstable cycles almost surely.
We can state stronger results for a more restrictive class of games
admitting \emph{gradient-like} vector fields. Specifically, analogous
to~\cite{benaim:1995aa}, we can consider Morse-Smale vector fields.  We introduce a new class of games, which we call \emph{Morse-Smale
games}, that are a generalization of potential games. These are a very important
class of games since the vector field of $\omega$ corresponds to Morse-Smale vector fields which are known
to be generic in $\mb{R}^2$ and are otherwise structurally
stable~\cite{hirsch:1976aa,palis:1970aa}. 

\begin{definition}
    A game $(f_1,\ldots, f_n)$ with $f_i\in C^r$ for some $r\geq 3$ and where strategy spaces $X_i$ is a smooth,
    compact manifold without boundary for each $i\in \mc{I}$ is a Morse-Smale game if the vector field
    corresponding to the differential $\omega$ is Morse-Smale---that is, the
    following hold:
    (i) all periodic orbits $\xi$ (i.e.~equilibria and cycles) are
            hyperbolic and $W^s(\xi)\pitchfork W^u(\xi)$ (i.e.~the stable and
            unstable manifolds of $\xi$ intersect transversally),
       (ii) every forward and backward omega limit set is a periodic orbit,
     (iii) and $\omega$ has a global attractor.
\end{definition}
The conditions of Morse-Smale in the above definition ensure that there are only
finitely many periodic orbits.  The dynamics of games with more general
vector fields, on the other hand, can admit chaos (e.g. the classic Lorentz attractor can be cast as gradient-play in a 3-player game). Hyperbolic equilibria and
periodic orbits are the only types of limiting
behavior that have been shown to correspond to strategies relevant to the
underlying game~\cite{benaim:1997ab}.
The simplest example of a Morse-Smale vector field is a gradient flow. However,
not all Morse-Smale vector fields are gradient flows and hence, not all
Morse-Smale games are potential games. 
\begin{example}
    Consider the
$n$-player game with $X_i=\mb{R}$ for each $i\in \mc{I}$ and
$f_n(x)=x_n(x_1^2-1), \ f_i(x)=x_ix_{i+1}, \ \forall i\in \mc{I}/\{n\}$
This  is a Morse-Smale game that is not a potential game.  Indeed, $\dot{x}=-\omega(x)$ where
$\omega=[x_2, x_3, \ldots, x_{n-1}, x_1^2-1]$
is a dynamical system with a Morse-Smale vector field that is not a gradient
vector field~\cite{conley:1978aa}.
\end{example}

Essentially, in a neighborhood of a critical point for a Morse-Smale game, the
game behavior can be described by a Morse function $\phi$ such that near critical
points $\omega$ can be written as $D\phi$ and away from critical points $\omega$
points in the same direction as $D\phi$---i.e.~$\omega\cdot D\phi>0$. 
Specializing the class of Morse-Smale games, we have stronger convergence
guarantees. 
\begin{theorem}
    Consider a Morse-Smale game $(f_1,\ldots, f_n)$ on smooth boundaryless
    compact manifold $X$. Suppose
    Assumptions~\ref{ass:estmartin} and \ref{ass:others} hold and that $\{x_t\}$
    is defined on $X$. Let $\{\xi_i, \
    i=1, \ldots, l\}$ denote the set of periodic orbits in $X$. Then
    $\sum_{i=1}^l P(L(\{x_t\})=\xi_i)=1$ and $P(L(\{x_t\})=\xi_i)>0$
    implies $\xi_i$ is linearly stable.
    Moreover, if the periodic orbit $\xi_i$
    with $P(L(\{x_t\})=\xi_i)>0$
     is an equilibrium, then it is either a
    non-degenerate differential Nash equilibrium---which is generically a local
    Nash---or a non-Nash locally asymptotically stable equilibrium. 
    \label{thm:Morsesmale}
\end{theorem}
 The proof of Theorem~\ref{thm:Morsesmale} follows by invoking
 Corollary~\ref{cor:benaim} in Appendix~\ref{app:proofs}.
\fi

\section{Saddle Point LNE in LQ Dynamic Games}
\label{sec:LQR}

In this section, we present empirical results that show that a non-negligible subset of
two-player LQ games have local Nash equilibria that are strict saddle points
of the gradient dynamics. LQ games serve as good benchmarks for analyzing the
limiting behavior of gradient-play in a non-trivial setting since they are known to admit global Nash equilibria that can be found be solving a coupled set of Riccati equations \cite{BasarOlsder}. LQ games can also be cast as multi-agent reinforcement learning problems where each agent has a policy that is a linear function of the state and a quadratic reward function. Gradient-play in LQ games can therefore be seen as a form of policy gradient.

The empirical results we now present imply that, even in the relatively straightforward case of linear dynamics, linear feedback policies, and quadratic costs, policy gradient
multi-agent reinforcement learning would be unable to find the local Nash equilibrium in a non-negligible subset of problems.

\paragraph{LQ game setup} For simplicity, we consider two-player LQ games in $\mathbb{R}^2$. Consider a discrete time dynamical
system defined by
\begin{align}
z(t+1)=Az(t)+B_1u_1(t)+B_2u_2(t)
\label{eq:updateLQR}
\end{align}
where $z(t) \in \mathbb{R}^2$ is the state at time $t$, $u_1(t)$ and $u_2(t)$ are
the control inputs of players $1$ and $2$, respectively, and $A$, $B_1$, and $B_2$ are the system matrices. We assume that player
$i$ searches for a linear feedback policy of the form $u_i(t)=-K_iz(t)$
that minimizes their loss which is given by
\[\textstyle f_i(z_0,u_1,u_2)=\sum_{t=0}^\infty z(t)^TQ_iz(t)+u_{i}(t)^TR_iu_{i}(t)\] 
where $Q_i \succ 0$ and $R_i \succ 0$ are the cost matrices on the state and
input, respectively. We note that the two players are coupled
through the dynamics since $z(t)$ is constrained to obey the update equation
\eqref{eq:updateLQR}. 
The vector of player derivatives is given by $\omega(K_1,K_2)=(D_1f_1(K_1,K_2),D_2f_2(K_1,K_2))$  where
\[D_if_i(K_1,K_2)
\textstyle=(R_{ii}K_i+B_i^TP_i(B_1K_1+B_2K_2)-B_i^TP_iA)\sum_{t=0}^\infty
z(t)z(t)^T, \ i\in\{1,2\}.\]
Note that there is a slight abuse of notation here as we are treating $D_if_i$
as a matrix and as the vectorization of a matrix.  %
 The matrices $P_1$ and $P_2$ can be found by solving the Riccati equations 
 \begin{align*}
P_i & = (A-B_1K_1-B_2K_2)^TP_i(A-B_1K_1-B_2K_2)+ K_i^TR_iK_i +Q_i, \ \
i\in\{1,2\},
\end{align*}
 for a given
$(K_1,K_2)$.
As shown in \cite{BasarOlsder}, global Nash equilibria of LQ games can be found by solving coupled Ricatti equations. Under the following assumption, this can be done using an analogous method to the method of Lyapunov iterations outlined in \cite{LyapIterCitation} for continuous time LQ games. 

\begin{assumption}
Either $(A,B_1,\sqrt{Q_1})$ or $(A,B_2,\sqrt{Q_2})$ is stabilizable-detectable.
\end{assumption}

Further information on the uniqueness of Nash equilibria in LQ games and the method of Lyapunov iterations can be found in \cite{BasarOlsder} and \cite{LyapIterCitation} respectively.  

\paragraph{Generating LQ games with strict saddle point Nash equilibria} Without loss of generality, we assume $(A,B_1,\sqrt{Q_1})$ is stabilizable-detectable. Given that we have a method of finding the global Nash equilibrium of the LQ game, we now present our experimental setup. 

We fix $B_1$, $B_2$, $Q_1$, and $R_1$ and parametrize $Q_2$, and $R_2$ by  $q$ and $r$ respectively. The shared dynamics matrix $A$ has entries that are sampled from the uniform distribution supported on $(0,1)$.  For each value of the parameters $b$, $q$, and $r$, we randomly sample $1000$ different $A$ matrices. Then, for each LQ game defined in terms of each of the sets of parameters, we find the optimal feedback matrices $(K_1^*,K_2^*)$ using the method of Lyapunov iterations, and we numerically approximate $D\omega(K_1^*,K_2^*)$ using auto-differentiation tools and check its eigenvalues.

The exact values of the matrices are defined as follows: $A \in \mb{R}^{2\times 2}$ with each of the entries $a_{ij}$ sampled from the uniform distribution on $(0,1)$,
\begin{align*}
    B_1=\begin{bmatrix}1\\1\end{bmatrix}, \
    B_2=\begin{bmatrix}0\\1\end{bmatrix}, \ Q_1=\begin{bmatrix}0.01 & 0\\0 &
        1\end{bmatrix}, \ Q_2=\begin{bmatrix}1 & 0\\0 & q \end{bmatrix}, \
    R_1=0.01, \ R_2=r.
\end{align*}

The results for various combinations of the parameters  $q$ and $r$ are shown in Figure~\ref{fig:lqr}. For all of the different parameter configurations considered, we found that in anywhere from $0\%-25\%$ of the randomly sampled LQ games, there was a global Nash equilibrium that was a strict saddle point of the gradient dynamics. Of particular interest is the fact that for all values of $q$ and $r$ we tested, at least $5\%$ of the LQ games had a global Nash equilibrium with the strict saddle property. In the worst case, around $25\%$ of the LQ games for the given values of $q$ and $r$ admitted such Nash equilibria. 

 \begin{figure}[t]
\center    
      \includegraphics[width=0.75\textwidth]{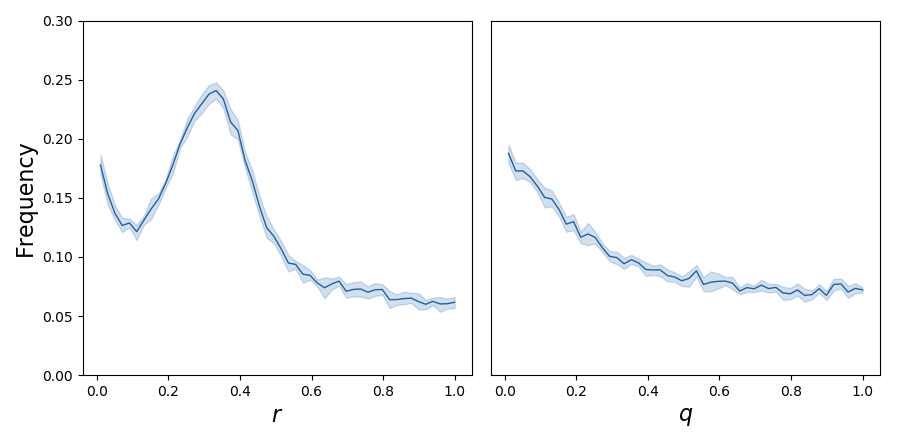}
        \caption{Frequency (out of 1000) of randomly sampled LQ games with global Nash equilibria that are avoided by policy-gradient. The experiment was run 10 times and the average frequency is shown by the solid line. The shaded region demarcates the $95\%$ confidence interval of the experiment.   (left) $r$ is varied in $(0,1)$, $q=0.01$. (right) $q$ is varied in $(0,1)$, $r=0.1$.}
        \label{fig:lqr}
\end{figure}

\begin{remark}
These empirical observations imply that multi-agent policy gradient, even in the relatively straightforward setting of linear dynamics, linear policies, and quadratic costs, has no guarantees of convergence to the global Nash equilibria in a non-negligible number of games. Further investigation is warranted to validate this fact theoretically. This in turn supports the idea that for more complicated cost functions, policy classes, and dynamics, local Nash equilibria with the strict saddle property are likely to be very common. 
\end{remark}

\section{Discussion and Future Directions}
\label{sec:discussion}

In this paper we provided answers to the following two questions for classes of gradient-based learning algorithms:

\begin{description}[leftmargin=20pt]
    \item[Q1.] \emph{Are all attractors of the learning algorithms employed by agents equilibria relevant to the underlying game?}
    \item[Q2.] \emph{Are all equilibria relevant to the game also attractors of the learning algorithms agents employ?}
\end{description}

We answered these questions in general-sum, zero-sum, and potential games without imposing structure on the game outside regularity conditions on the cost functions by exploiting the observation that gradient-based learning dynamics are not gradient flows. Our analysis, was shown in Section~\ref{sec:examples} to apply to a number of commonly used methods in multi-agent learning. 

\subsection{Links with Prior Work}
As we noted, previous work on learning in games in both the game theory literature, and more recently from the machine learning community,
has largely focused on \textbf{Q1}, though some recent work has analyzed \textbf{Q2} in the setting of zero-sum games.

 In the seminal work by Rosen \cite{Rosen1965},  $n$--player concave or monotone games are shown to either admit a unique Nash equilibrium or a continuum of Nash equilibria, all of which are attracting under gradient-play. The structure present in these games rules out the existence of non-Nash equilibria.

Two-player, finite-action bilinear games have also been extensively studied. In \cite{GradDyn}, the authors investigate the convergence of the gradient dynamics in  such games. Additionally, the dynamics of other (non gradient-based) algorithms like multiplicative weights have been studied in \cite{hommes:2012aa} among many others. In such settings, the structure guarantees that there exists a unique global Nash equilibrium and no other critical points of the gradient dynamics. As such, non-Nash equilibria, cannot exist.

In the study of learning dynamics in the class of zero-sum games, it has been shown that cycles can be attractors of the dynamics (see, e.g.,~\cite{mertikopoulos:2018aa, Wesson2016, hommes:2012aa}). Concurrently with our results, \cite{Daskalakis} also showed the existence of non-Nash attracting equilibria in this setting. 

In more general settings, there has been some analysis of the limiting behavior of gradient-play though the focus has been for the most part, on giving sufficient conditions under which Nash equilibria are attracting under gradient-play. For example, ~\cite{ratliff:2013aa,ratliff:2014aa,ratliff:2016aa}, introduced the notion of a differential Nash equilibrium which is characterized by first and second order conditions on the players' individual cost functions and which we made extensive use of. Following this body of work, \cite{Mertikopoulos2019} also investigated the local convergence of gradient-play in continuous games. They showed that if a Nash equilibrium satisfies a property known as \emph{variational stability}, the equilibrium is attracting under gradient play. In twice continuously differentiable games, this condition coincides exactly with the definition of stable differential Nash equilibria. Though these works  analyze a general class of games, the focus of the  analysis is solely on the local characterization and computation (via gradient play) of local Nash equilibria. As such, the issues of non-convergence that we show in this paper were not discussed.

\subsection{Open Questions}
Our results suggest that gradient-play in multi-agent settings has
fundamental problems. Depending on the players' costs, in general games and even potential games, which have a particularly \emph{nice} structure, a subset of the Nash equilibria will be almost surely avoided by gradient-based learning when the agents randomly initialize their first action. In zero-sum and general-sum games, even if the algorithms do converge, they may have converged to a point that has no game theoretic relevance, namely a non-Nash locally asymptotically stable equilibrium.

Lastly, these results show that limit cycles persist even under a stochastic update scheme. This
explains the empirical observations of limit cycles in gradient dynamics presented in ~\cite{daskalakis:2017aa,leslie:2005aa,hommes:2012aa}. It also implies
that gradient-based learning in multi-agent reinforcement learning, multi-armed bandits, generative
adversarial networks, and online optimization all admit limit cycles under certain loss functions. Our empirical results show that these problems are not merely of theoretical interest, but also have great relevance in practice. 

Which classes of games have all Nash being attracting for gradient-play and which classes preclude the existence of non-Nash equilibria is an open and particularly interesting question. Further, the question of whether gradient-based algorithms can be constructed for which only game-theoretically relevant equilibria are attracting is of particular importance as gradient-based learning is increasingly implemented in game theoretic settings. Indeed, more generally, as learning algorithms are increasingly deployed in markets and other competitive environments understanding and dealing with such theoretical issues will become increasingly important.

\appendix
\section{Proofs of the Main Results}
\label{app:proofs}
This appendix contains the full proofs of the results in the paper.
\subsection{Proofs on Links Between Dynamical Systems and Games}
We begin with a proof of Proposition~\ref{lem:nddne} that all differential Nash equilibria are either strict saddle points or asymptotically stable equilibria of the gradient dynamics. This relies mainly on the definitions of strict saddle points, locally asymptotically stable equilibria, and non-degenerate differential Nash equilibria and simple linear algebra.

\begin{proof}[Proof of Proposition~\ref{lem:nddne}]
    Suppose that $x\in X$ is a non-degenerate differential Nash equilibrium.
We claim that $\tr(D\omega(x))>0$. 
Since
 $x$ is a differential Nash equilibrium,  $D_{i}^2f_i(x)\succ0$ for each $i
 \in \mc{I}$; these are the diagonal blocks of $D\omega(x)$. Further
 $D_{i}^2f_i(x)\succ0$ implies that  $\tr(D_{i}^2f_i(x))>0$. Since
 $\tr(D\omega)=\sum_{i=1}^n\tr(D_{i}^2f_i(x))$, $\tr(D\omega(x))>0$. Thus,
 it is not possible for all the eigenvalues to have negative real part. Since
 $x$ is non-degenerate,
$\det(D\omega(x))\neq 0$ so that
 none of the eigenvalues can have zero real part. Hence, at least one eigenvalue has strictly positive real part.

 To complete the proof, we show that the conditions for non-degenerate differential Nash equilibrium are not sufficient to
 guarantee that $x$ is locally asymptotically stable for the gradient dynamics---that is, not all eigenvalues of $D\omega(x)$
 have strictly positive real part. We do this by constructing a class of games
 with the strict saddle point property.
 Consider a class of two player games $\mc{G}=(f_1,f_2)$ on $\mb{R}\times \mb{R}$ defined as follows:
\[ (f_1(x_1,x_2),f_2(x_1,x_2))=\Big(\frac{a}{2}x_1^2+bx_1x_2, \frac{d}{2}x_2^2+cx_1x_2\Big). \]
In this game, the Jacobian of the gradient dynamics is given by
\begin{equation}
D\omega(x)= \begin{bmatrix} a & b \\ c & d \end{bmatrix}
\label{eq:domega2}
\end{equation}
with $a,b,c,d \in \mathbb{R}$. If $x$ is a non-degenerate differential Nash equilibria, $a,d>0$ and
$\det(D\omega(x))\ne 0$ which implies that $ad\ne cb$. Choosing $c,d$ such that
$ad<cb$ will guarantee that one of the eigenvalues of $D\omega(x)$ is negative
and the other is positive, making $x$ a strict saddle point. This shows that non-degenerate differential Nash equilibria can be
strict saddle points of the combined gradient dynamics.

Hence, for any game $(f_1,\ldots, f_n)$, a non-degenerate differential Nash equilibrium is either a locally asymptotically stable equilibrium or a strict saddle point, but
it not strictly unstable or strictly marginally stable (i.e.~having eigenvalues
all on the imaginary axis).
\end{proof}

The proof of Proposition~\ref{prop:zsg}, which claims that all differential Nash equilibria in zero-sum games are locally asymptotically stable, again just relies on basic linear algebra and the definition of a differential Nash equilibrium.

\begin{proof}[Proof of Proposition~\ref{prop:zsg}]
    Consider  a two player game $(\f, -\f)$ on $X_1\times
    X_2=\mb{R}^m$ with
$X_i=\mb{R}^{m_i}$. For such a game,  
\[D\omega(x)=\bmat{\ \ D_{1}^2f(x) &\ \  D_{21} f(x)\\
    -D_{12}f(x) & -D_{2}^2f(x)}.\]
Note that   $D_{21}f(x)=(D_{12}f(x))^T$.
Suppose that $x=(x_1,x_2)$ is a differential Nash equilibrium 
and let $v=[v_1,v_2]\in \mathbb{R}^m$ with 
 $v_1 \in \mb{R}^{m_1}$ and $v_2 \in \mb{R}^{m_2}$. Then,
$v^TD\omega(x)v
= v_1^TD_{1}^2f(x)v_1-v_2^TD_{2}^2f(x)v_2 >0$
 since $D_{1}^2f(x)\succ0$ and $-D_{2}^2f(x)\succ0$ for
$x$, a differential Nash equilibrium. 
Since $v$ is arbitrary, this implies that $D\omega(x)$ is positive definite and
hence, clearly non-degenerate.
Thus, for two-player zero-sum games, all differential Nash equilibria are both
non-degenerate differential Nash equilibria and locally asymptotically stable equilibria of $\dot x=-\omega(x)$
\end{proof}

The proof that all locally asymptotically stable equilibria in potential games are differential Nash equilibria relies on the symmetry of $D\omega$ in potential games.

\begin{proof}[Proof of Proposition~\ref{prop:potentialgame}]
    The proof  follows from the definition of a potential game.
    Since $(f_1,$ $\ldots, f_n)$ is a potential game, it admits a potential
    function $\phi$ such that $D_if_i(x)=
D_i\phi(x)$ for all $x$. This, in turn, implies that at a locally asymptotically
stable equilibrium of $\dot x=-\omega(x)$, $D\omega(x)=D^2\phi(x)$, where
$D^2\phi$ is the Hessian matrix of the function $\phi$. Further $D^2\phi(x)$
must have strictly positive eigenvalues for $x$ to be a locally asymptotically
stable equilibrium of $\dot x=-\omega(x)$. Since the Hessian matrix of a
function must be symmetric, $D^2\phi(x)$, must be positive definite, which
through Sylvester's criterion ensures that each of the diagonal blocks of $D^2\phi(x)$ is positive definite. Thus, we have that the existence of a potential function guarantees that the only locally asymptotically stable equilibria of $\dot x=-\omega(x)$, are differential Nash equilibria.
\end{proof}

\subsection{Proofs for Deterministic Setting}
We now present the proof of Theorem~\ref{thm:fullinfo} and its corollaries.
The proof of relies on the celebrated stable manifold
theorem~\cite[Theorem~III.7]{shub:1978aa},
\cite{smale:1967aa}. Given a map $\phi$, we use the notation $\phi^t=\phi \circ \cdots \circ \phi$ to denote the $t$--times composition of $\phi$.
\begin{theorem}[{Center and Stable Manifolds~\cite[Theorem~III.7]{shub:1978aa},
\cite{smale:1967aa}}]
    Let $x_0$ be a fixed point for the $C^r$ local diffeomorphism $f:U\rar
    \mb{R}^d$ where $U\subset \mb{R}^d$ is an open neighborhood of $x_0$ in
    $\mb{R}^d$ and $r\geq 1$. Let $E^s\oplus E^c\oplus E^u$ be the invariant
    splitting of $\mb{R}^d$ into generalized eigenspaces of $D\phi(x_0)$
    corresponding to eigenvalues of absolute value less than one, equal to one,
    and greater than one. To the $D\phi(x_0)$ invariant subspace $E^s\oplus E^c$
    there is an associated local $\phi$--invariant $C^r$ embedded disc
    $W_{\text{loc}}^{cs}$ called the local stable center manifold of dimension $\dim(E^s\oplus E^c)$ and ball $B$ around
    $x_0$ such that
    $\phi(W_{\text{loc}}^{cs})\cap B\subset W_{\text{loc}}^{cs}$, and if $\phi^t(x)\in
    B$ for all $t\geq 0$, then $x\in W_{\text{loc}}^{sc}$.
    \label{thm:centerstable}
\end{theorem}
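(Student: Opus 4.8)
The plan is to prove this by the Lyapunov–Perron method, characterizing the center–stable manifold as the set of initial conditions whose forward orbits grow no faster than a prescribed rate strictly below the unstable eigenvalue moduli. First I would reduce to $x_0 = 0$ by translation and write $\phi(x) = Ax + R(x)$ with $A = D\phi(0)$ and remainder $R = \phi - A$ satisfying $R(0) = 0$ and $DR(0) = 0$. Using a smooth cutoff that leaves $\phi$ unchanged on a small ball $B$ but modifies it outside, I would arrange that $R$ has a globally small Lipschitz constant $\delta$. Since orbits that remain in $B$ for all forward time are identical for the modified and original maps, the local statement can be recovered at the end from a globally constructed object.

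Next I would exploit the spectral splitting $\mb{R}^d = E^s \oplus E^c \oplus E^u$. Writing $E^{cs} = E^s \oplus E^c$ with spectral projections $P_{cs}, P_u$ and block maps $A_{cs} = A|_{E^{cs}}$, $A_u = A|_{E^u}$, I would fix a rate $a$ with $1 < a < \min\{|\lambda| : \lambda \in \spec(A_u)\}$ and pass to adapted (Lyapunov) norms so that $\|A_{cs}\| < a$ and $\|A_u^{-1}\| < a^{-1}$. In the Banach space $X_a = \{(x_n)_{n\ge0} : \sup_n a^{-n}\|x_n\| < \infty\}$, I would define, for each $\xi \in E^{cs}$, the Lyapunov–Perron operator $\mathcal{T}_\xi$ by the variation-of-constants formula
\[ (\mathcal{T}_\xi \mathbf{x})_n = A_{cs}^n \xi + \sum_{k=0}^{n-1} A_{cs}^{n-1-k} P_{cs} R(x_k) - \sum_{k=n}^{\infty} A_u^{n-1-k} P_u R(x_k), \]
which integrates the center–stable part forward and the unstable part backward, so that fixed points correspond exactly to orbits of $x_{n+1} = Ax_n + R(x_n)$ lying in $X_a$. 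The spectral gap makes both series converge in $X_a$, and for $\delta$ small the contraction constant (proportional to $\delta$ times a bounded sum of the ratios $\|A_{cs}\|/a$ and $a\|A_u^{-1}\|$) is below one, so Banach's theorem yields a unique fixed point $\mathbf{x}(\xi)$.

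Having the fixed point, I would define $h : E^{cs} \rar E^u$ by $h(\xi) = P_u x_0(\xi) = -\sum_{k\ge0} A_u^{-1-k} P_u R(x_k(\xi))$ and set $W^{cs}_{\text{loc}} = \mathrm{graph}(h) \cap B$. The invariance $\phi(W^{cs}_{\text{loc}}) \cap B \subset W^{cs}_{\text{loc}}$ follows because the shifted sequence $(x_{n+1})_{n\ge0}$ is again a fixed-point orbit, with center–stable initial data $A_{cs}\xi + P_{cs}R(x_0)$. The characterization—that $\phi^t(x) \in B$ for all $t \ge 0$ forces $x \in W^{cs}_{\text{loc}}$—follows from uniqueness of the $X_a$-orbit, since a forward orbit trapped in $B$ is bounded and hence lies in $X_a$. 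Tangency $Dh(0) = 0$, which pins down the dimension as $\dim(E^s \oplus E^c)$, is immediate from $DR(0) = 0$.

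The hard part will be upgrading $h$ from Lipschitz to $C^r$. I would differentiate the fixed-point relation in the parameter $\xi$ and invoke the fiber-contraction theorem to bootstrap smoothness one derivative at a time, each derivative solving a further contraction in an appropriate sequence space built from the formal differentiated equations. The genuine delicacy lies in the center directions: because $\spec(A_c)$ sits exactly on the unit circle there is no exponential gap there, so one must take $a$ strictly above $1$ and accept only finite ($C^r$) regularity together with non-uniqueness of the center–stable manifold, with the adapted norm ensuring the center block contributes at most a factor $1 + \varepsilon < a$ in every estimate.
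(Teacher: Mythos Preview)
The paper does not prove this theorem. Theorem~\ref{thm:centerstable} is stated with explicit attribution to \cite[Theorem~III.7]{shub:1978aa} and \cite{smale:1967aa} and is invoked as a black box inside the proof of Theorem~\ref{thm:fullinfo}; there is no argument in the paper to compare your proposal against.

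That said, your Lyapunov--Perron sketch is one of the two standard proofs of the center--stable manifold theorem (the other being Hadamard's graph transform), and the outline is essentially correct: localize by a cutoff so the nonlinearity has small global Lipschitz constant, choose a growth rate $a$ in the spectral gap between the center--stable and unstable blocks, set up the variation-of-constants operator on the weighted sequence space $X_a$, obtain the graph $h$ from the unique fixed point, and read off invariance, the trapping characterization, and tangency. The fiber-contraction bootstrap for $C^r$ regularity is also the right tool, and you correctly flag the subtlety that the center block forces $a>1$ and yields only finite smoothness and non-uniqueness. One small point worth making explicit in a full write-up: the cutoff must be done so that the modified map remains a diffeomorphism (or at least so that the fixed-point orbit stays inside the region where the cutoff is the identity), otherwise the identification of fixed points of $\mathcal{T}_\xi$ with genuine orbits of $\phi$ can fail outside $B$; your final ``orbits trapped in $B$'' clause handles this for the characterization, but the invariance statement $\phi(W^{cs}_{\text{loc}})\cap B\subset W^{cs}_{\text{loc}}$ also needs it.
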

Some parts of the
proof follow similar arguments to the proofs of results
in~\cite{lee:2016aa,panageas:2016aa} which apply to (single-agent)
gradient-based optimization. Due to the different learning rates employed by the agents and
 the introduction of the differential game form $\omega$, the proof
 differs. 

\begin{proof}[Proof of Theorem~\ref{thm:fullinfo}]
The proof is composed of two parts: (a) the map $g$ is a diffeomorphism, and (b) application of the stable manifold theorem to conclude that the set of initial conditions is measure zero.

\paragraph{(a) $g$ is diffeomorphism}  We claim the mapping $g:\mb{R}^m\rar \mb{R}^m$ is a diffeomorphism. If we can
 show that $g$ is invertible and a local diffeomorphism, then the claim follows. Consider $x\neq y$ and suppose $g(y)=g(x)$ so that
    $y-x=\gamma\cdot (\omega(y)-\omega(x))$. The assumption $\sup_{x\in
        \mb{R}^m}\|D\omega(x)\|_2\leq L<\infty$ implies that $\omega$ satisfies
        the Lipschitz condition on $\mb{R}^m$. Hence,
        $\|\omega(y)-\omega(x)\|_2\leq L\|y-x\|_2$.
        Let $\Gamma=\diag({\Gamma}_1, \ldots, {\Gamma}_n)$ where
        ${\Gamma}_i=\diag( (\gamma_i)_{j=1}^{m_i})$---that is,
        ${\Gamma}_i$ is an $m_i\times m_i$ diagonal matrix with $\gamma_i$
        repeated on the diagonal $m_i$ times.
       Then, 
       $\|x-y\|_2\leq L\|\Gamma\|_2\|y-x\|_2<\|y-x\|_2$
       since $\|\Gamma\|_2=\max_i|\gamma_i|<1/L$.

       Now, observe that $Dg=I-\Gamma D\omega(x)$. If $Dg$ is invertible, then the
        implicit function theorem~\cite[Theorem~C.40]{lee:2012aa} implies that
        $g$ is a local diffeomorphism. Hence, it suffices to show that $\Gamma
        D\omega(x)$ does not have an eigenvalue of $1$. Indeed, letting $\rho(A)$ be
        the spectral radius of a matrix $A$, we know in general that
        $\rho(A)\leq \|A\|$ for any square matrix $A$ and induced operator norm
        $\|\cdot\|$ so that
         $\rho(\Gamma D\omega(x))\leq \|\Gamma D\omega(x)\|_2\leq
        \|\Gamma\|_2\sup_{x\in\mb{R}^m}\|D\omega(x)\|_2<\max_i|\gamma_i|L<1$
        Of course, the spectral radius is the maximum absolute value of the
        eigenvalues, so that the above implies that all eigenvalues of  $\Gamma
        D\omega(x)$ have absolute value less than $1$.

 Since $g$ is injective by the preceding argument, its inverse is well-defined and since $g$
        is a local diffeomorphism on $\mb{R}^m$, it follows that $g^{-1}$ is
        smooth on $\mb{R}^m$. Thus, $g$ is a diffeomorphism.

   \paragraph{(b) Application of the stable manifold theorem} Consider all critical points to the game---i.e.~$\mc{X}_c=\{x\in
            {X}|\ \omega(x)=0\}$. For each $p\in\mc{X}_c$, let $B_p$ be the
            open ball derived from Theorem~\ref{thm:centerstable} and let
            $\mc{B}=\cup_{p}B_p$. Since ${X}\subseteq \mb{R}^{m}$, Lindel\~{o}f's
            lemma~\cite{kelley:1955aa}---every open cover has a countable subcover---gives a
            countable subcover of $\mc{B}$. That is, for a countable set of
            critical points $\{p_i\}_{i=1}^\infty$ with $p_i\in \mc{X}_c$, we
            have that $\mc{B}=\cup_{i=1}^\infty B_{p_i}$. 

            Starting from some point $x_0\in X$, if gradient-based learning
             converges to a strict saddle point, then there exists a $t_0$ and
             index $i$ such that $g^t(x_0)\in B_{p_i}$ for all $t\geq t_0$.
             Again, applying Theorem~\ref{thm:centerstable} and using that
             $g(X)\subset X$---which we note is obviously true if
             $X=\mb{R}^m$---we get that
             $g^t(x_0)\in W_{\text{loc}}^{cs}\cap X$. 

             Using the fact that $g$ is invertible, we can iteratively construct
             the sequence of sets defined by
             $W_1(p_i)=g^{-1}(W_{\text{loc}}^{cs}\cap X)$ and
             $W_{k+1}(p_i)=g^{-1}(W_k(p_i)\cap X)$. Then we have that $x_0\in
             W_t(p_i)$ for all $t\geq t_0$. The set $\mc{X}_0=\cup_{i=1}^\infty
             \cup_{t=0}^\infty W_t(p_i)$ contains all the initial points in $X$
             such that gradient-based learning converges to a strict saddle. 
     Since $p_i$ is a strict saddle, $I-\Gamma D\omega(p_i)$ has an
             eigenvalue greater than $1$. This implies that the
             co-dimension of $E^u$ is strictly less than $m$.
             (i.e.~$\dim(W_{\text{loc}}^{cs})<m$). Hence,
             $W_{\text{loc}}^{cs}\cap X$ has Lebesgue measure zero in
             $\mb{R}^m$. 

             Using again that $g$ is a diffeomorphism, $g^{-1}\in C^1$ so that
             it is locally Lipschitz and locally Lipschitz maps are null set
             preserving. Hence, $W_k(p_i)$ has measure zero for
             all $k$ by induction so that $\mc{X}_0$ is a measure zero set since
             it is a countable union of
             measure zero sets.
\end{proof}

The proof of Corollary~\ref{cor:msfinite} follows from the symmetry of $D\omega$ in potential games, and our observations in Section~\ref{sec:connections}.

\begin{proof}[Proof of Corollary~\ref{cor:msfinite}]
Since the game admits a potential function $\phi$, there is a transformation of
coordinates such that agents following the dynamics
$x_{t+1}=x_t-\gamma\odot\omega(x_t)$ converge to the same equilibria as the
gradient dynamics $x_{t+1}=x_t-\gamma\odot D\phi(x_t)$.
Hence, the analysis of the gradient-based learning scheme reduces to analyzing 
gradient-based optimization of
$\phi$. 
Moreover, existence of a potential function also implies that $D_{ij}f_j\equiv
D_{ji}f_i$ so that $D\omega$ is symmetric.
Indeed, writing $\omega(x)$ as the differential form $\sum_{i=1}^n
D_if_i(x) \dd x_i$
and noting that 
 $\dd \circ \dd=0$ for the differential operator $d$, we have that 
    $\dd(\omega)=\sum_i\dd(D_if_i )\wedge %
        \dd x_i=\sum_{i,j: j>i}\left(D_{ij}f_j-D_{ji}f_i \right)\dd x_i\wedge
        \dd x_j=0$ where $\wedge$ is the standard exterior product~\cite{lee:2012aa}.
Symmetry of $D\omega$ implies that all periodic orbits are
equilibria---i.e.~the dynamics do not possess any limit cycles.
By Theorem~\ref{thm:fullinfo}, the set of initial points that converge to
strict saddle points is of measure zero.    Since all the stable
critical points of the dynamics are equilibria, with the assumption that
$\lim_{t\rar \infty} g^t(x)$ exists for all $x\in X$, we have that
    $P_\nu\left[ \lim_{t\rar \infty}g^t(x)=x^\ast \right]=1$
where $x^\ast$ is 
a non-degenerate differential Nash equilibrium which is generically a local Nash
equilibrium~\cite{ratliff:2014aa}.

\end{proof}

\subsection{Classical Results from Dynamical Systems}
The remaining results use the following classical result from dynamical
systems theory.  
 Consider a general stochastic approximation framework
$x_{t+1}=x_t+\gamma_t(h(x_t))+\epsilon_t$
for $h:X\rar TX$ with $h\in C^2$ and where $X\subset \mb{R}^d$ and where $TX$
denotes the tangent space. 
\begin{theorem}[{Theorem~1~\cite{pemantle:1990aa}}]
    Suppose $\gamma_t$ is $\mc{F}_t$--measurable and $\mb{E}[w_t|\mc{F}_t]=0$.
    Let the stochastic process $\{x_t\}_{t\geq 0}$ be defined as above for some
    sequence of random variables $\{\epsilon_t\}$ and $\{\gamma_t\}$. Let $p\in X$
    with $h(p)=0$ and let $W$ be a neighborhood of $p$. Assume that there are
constants $\eta\in(1/2,1]$ and $c_1,c_2,c_3,c_4>0$ for which the following
conditions are satisfied whenever $x_t\in W$ and $t$ sufficiently large:
(i) $p$ is a linear unstable critical point,
(ii) $c_1/t^{\eta}\leq \gamma_t\leq c_2/t^{\eta}$,
 (iii) $\mb{E}[(w_t\cdot v)^+|\mc{F}_t]\geq c_3/t^{\eta}$ for every unit
        vector $v\in TX$,
and (iv) $\|w_t\|_2\leq c_4/t^{\eta}$.
Then $P(x_t\rar p)=0$.
    \label{thm:pementle}
\end{theorem}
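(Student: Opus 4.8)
The plan is to prove $P(x_t\to p)=0$ by working on the event $E=\{x_t\to p\}$ and showing that $E$ forces a suitably renormalized projection of the iterates onto the unstable direction to converge to the single value $0$, an event I will then argue has probability zero. First I would linearize. Since $p$ is a linearly unstable critical point, $A:=Dh(p)$ has an eigenvalue with strictly positive real part, so there is a nontrivial $A$-invariant unstable subspace $E^u$. Taking a unit left-eigenvector direction $v$ with associated real part $\lambda>0$, set $Y_t=\langle v,\,x_t-p\rangle$. On $E$ the iterate eventually remains in the neighborhood $W$, where Taylor expansion gives $h(x_t)=A(x_t-p)+O(\|x_t-p\|^2)$, so $Y_t$ obeys the scalar recursion $Y_{t+1}=(1+\gamma_t\lambda)Y_t+\gamma_t r_t+\langle v,\epsilon_t\rangle$ with remainder $r_t=O(\|x_t-p\|)\to 0$ on $E$.

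Next I would renormalize by the deterministic amplification $b_t=\prod_{s<t}(1+\gamma_s\lambda)$ and set $\tilde Y_t=Y_t/b_t$, so that $\tilde Y_{t+1}=\tilde Y_t+(\gamma_t r_t+\langle v,\epsilon_t\rangle)/b_{t+1}$. Because $\eta\le 1$ forces $\sum_t\gamma_t=\infty$, we have $b_t\to\infty$, and since $b_t\ge 1$ the convergence $Y_t\to 0$ on $E$ implies $\tilde Y_t\to 0$; hence $E\subset\{\tilde Y_\infty=0\}$. The martingale part $N_t=\sum_{s<t}\langle v,\epsilon_s\rangle/b_{s+1}$ is $L^2$-bounded: by condition (iv) its increments obey $|\langle v,\epsilon_s\rangle|\le c_4/s^\eta$, and $\eta>1/2$ makes $\sum_s c_4^2/s^{2\eta}<\infty$, so $N_t$ converges almost surely. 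The remainder sum is controlled through the telescoping identity $\gamma_s/b_{s+1}=\lambda^{-1}(b_s^{-1}-b_{s+1}^{-1})$, which gives $\sum_{s\ge t_0}\gamma_s/b_{s+1}=\lambda^{-1}b_{t_0}^{-1}<\infty$, so on $E$ (where $r_s\to 0$) its tail vanishes. Thus $\tilde Y_t$ converges almost surely on $E$, and everything reduces to showing $P(\tilde Y_\infty=0)=0$.

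The hard part will be establishing that the limit of this renormalized martingale carries no mass at the single point $0$, and this is exactly where conditions (ii) and (iii) are indispensable. Writing the noise increment as $\gamma_s\theta_{s+1}$ with $\theta_{s+1}=\langle v,\epsilon_s\rangle/\gamma_s$, the directional lower bound $\mb{E}[(\epsilon_s\cdot v)^+\mid\mc{F}_s]\ge c_3/s^\eta$ together with $\gamma_s\le c_2/s^\eta$ yields $\mb{E}[\theta_{s+1}^+\mid\mc{F}_s]\ge c_3/c_2$, while (iv) and $\gamma_s\ge c_1/s^\eta$ make $\theta_{s+1}$ bounded; consequently, since $\mb{E}[\theta_{s+1}\mid\mc{F}_s]=0$, each $\theta_{s+1}$ has conditional variance bounded below by a fixed $\sigma^2>0$ and a uniformly positive conditional probability of a definite-size positive (and negative) deviation. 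Crucially this holds for \emph{every} unit vector, hence for the unstable direction $v$ whatever it turns out to be. The residual conditional variance of the tail, $V_t=\sum_{s\ge t}\gamma_s^2\sigma^2/b_{s+1}^2$, is therefore strictly positive, so the conditional law of $\tilde Y_\infty$ given $\mc{F}_t$ cannot concentrate at a point: a martingale anti-concentration estimate gives $P(\tilde Y_\infty=0\mid\mc{F}_t)\le 1-\kappa$ for a fixed $\kappa>0$. Applying L\'evy's $0$--$1$ law to the $\mc{F}_\infty$-measurable event $\{\tilde Y_\infty=0\}$, the conditional probabilities converge almost surely to $\mathbf{1}\{\tilde Y_\infty=0\}$, and the uniform bound $\le 1-\kappa$ forces this indicator below $1$, hence equal to $0$ almost surely. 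Thus $P(\tilde Y_\infty=0)=0$, and since $E\subset\{\tilde Y_\infty=0\}$ we conclude $P(x_t\to p)=0$.

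Two technical points would need care. The scalar reduction above is immediate when the leading unstable eigenvalue is real; for a complex-conjugate unstable pair I would instead track $\|\Pi_u(x_t-p)\|_*$ in an adapted Lyapunov norm satisfying $\langle z,Az\rangle_*\ge\mu\|z\|_*^2$ with $\mu>0$ on $E^u$, and rerun the same renormalization, the directional bound (iii) again supplying the non-degeneracy in the relevant plane. The genuine crux is the anti-concentration estimate $P(\tilde Y_\infty=0\mid\mc{F}_t)\le 1-\kappa$: because the renormalized increments are martingale differences rather than independent, I cannot simply invoke a convolution argument, and pinning down a uniform $\kappa$ from the conditional non-degeneracy of infinitely many future increments is precisely the delicate probabilistic lemma underlying Theorem~\ref{thm:pementle}. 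This is the step I expect to absorb essentially all of the work, whereas the linearization, the $L^2$ convergence of the renormalized martingale, and the L\'evy $0$--$1$ conclusion are routine once the increment non-degeneracy is in hand.
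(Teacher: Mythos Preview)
The paper does not prove this statement. Theorem~\ref{thm:pementle} is quoted from Pemantle (1990) in the appendix under ``Classical Results from Dynamical Systems'' and is invoked as a black box to obtain Theorem~\ref{thm:gradfree}; there is no proof in the paper to compare your proposal against.

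For what it is worth, your outline is a faithful sketch of Pemantle's own argument: project onto an unstable eigendirection, renormalize by the cumulative amplification $b_t=\prod_{s<t}(1+\gamma_s\lambda)$, observe that $x_t\to p$ forces the renormalized coordinate $\tilde Y_t$ to zero, and rule this out by combining a conditional anti-concentration bound on the martingale limit with L\'evy's $0$--$1$ law. You also correctly locate the crux in the anti-concentration step. One technical point deserves more care than you give it: the linearized recursion for $Y_t$ and the bound $r_t\to 0$ are only valid once $x_t$ stays in $W$, so $\tilde Y_\infty$ is not defined unconditionally and you cannot argue $P(\tilde Y_\infty=0)=0$ on the whole space. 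Pemantle handles this by localizing to the events $E_t=\{x_s\in W\ \text{for all}\ s\ge t\}$, running the anti-concentration plus $0$--$1$ argument there, and then taking a union over $t$; your write-up should make that localization explicit rather than sliding between ``on $E$'' and unconditional statements.
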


\ifms
\else
\section{Expanded Results in the Stochastic Setting} In this appendix , we provide extended results in the stochastic setting that require more mathematical formalism than the main body of the paper. In addition, we introduce a new class of games that generalize potential games and have stronger convergence guarantees than the broader class of general-sum continuous games. 
\label{app:ms}
\subsection{Avoidance of Repelling Sets}
\label{app:repel}
To show that stochastic gradient-based learning avoids of more general limiting behaviors than saddle points, we need further assumptions on our underlying space---i.e.~we need the
underlying decision spaces of each agent---i.e.~$X_i$ for each $i\in \mc{I}$---to be \emph{smooth, compact manifolds without
boundary}\footnote{The torus $\mb{T}=\mb{S}^1\times \mb{S}^1$ is an example.  The interested reader can consult, e.g.,~\cite{lee:2012aa} for more details on differential geometry.}. 
The stochastic process $\{x_n\}$ which follows \eqref{eq:sa} is \emph{defined
on} $X$---that is, $x_n\in X$ for all $n\geq 0$. As before, it is natural to
compare sample points $\{x_n\}$ to solutions of $\dot{x}=-\omega(x)$ where we
think of \eqref{eq:sa} as a noisy approximation.  The asymptotic behavior of
$\{x_n\}$ can indeed be described by the asymptotic behavior of the flow
generated by $\omega$. 

We also need a formal notion of \emph{cycles}. A non-stationary periodic orbit of $\omega$ is called a \emph{cycle}. Let
$\xi\subset X$ be a cycle of period $T>0$. Denote by $\Phi_T$ the flow
corresponding to $\omega$. For any $x\in \xi$, $\spec(D\Phi_T(x))=\{1\}\cup
C(\xi)$ where $C(\xi)$ is the set of characteristic multipliers. We say $\xi$ is
\emph{hyperbolic} if no element of $C(\xi)$ is on the complex unit circle.
Further, if $C(\xi)$ is strictly inside the unit circle, $\xi$ is called \emph{linearly
stable} and, on the other hand, if $C(\xi)$ has at least one element on the
outside of the unit circle---that is, $D\Phi_T(x)$ for $x\in \xi$ has an
eigenvalue with real part strictly greater than $1$---then $\xi$ is called
\emph{linearly unstable}. The latter is the analog of strict saddle points in the context of periodic orbits.
We denote by $\{x_t\}$ sample paths
of the process \eqref{eq:sa} and $L(\{x_t\})$ is the \emph{limit set} of any
sequence $\{x_t\}_{t\geq 0}$ which is defined in the usual way as
all $p\in X$ such that $\lim_{k\rar \infty} x_{t_k}=p$ for some
sequence
$t_k\rar \infty$.
It was shown in~\cite{benaim:1996aa} that under less restrictive assumptions
than Assumptions~\ref{ass:estmartin} and \ref{ass:others}, $L(\{x_t\})$ is
contained in the
\emph{chain recurrent set} of $\omega$ and $L(\{x_t\})$ is a non-empty, compact
and connected set invariant under the flow of $\omega$.
\begin{theorem}
    Consider a game $(f_1,\ldots, f_n)$ where each $X_i$ is a smooth, compact
    manifold without boundary.   Suppose each agent $i\in \mc{I}$ adopts a
    stochastic gradient-based
   learning algorithm that satisfies Assumptions~\ref{ass:estmartin}
   and \ref{ass:others} and is such that sample points $x_t\in X$ for all $t\geq
   0$. Further, suppose that for each $i\in \mc{I}$, there
   exist a constant $b_i>0$ such that 
   $\mb{E}[(w_{i,t}\cdot v)^+|\mc{F}_{i,t}]\geq b_i$
   for every unit vector $v\in \mb{R}^{m_i}$. Then competitive stochastic
   gradient-based learning converges to 
  linearly unstable cycles on a set of measure
   zero---i.e.
   $P(L(x_t)=\xi)=0$
    where $\{x_t\}$ is a sample path.
    \label{thm:gradfreecycle}
\end{theorem}
As we noted, periodic orbits are not necessarily excluded from the limiting
behavior of gradient-based learning in games. 
We leave out the proof of Theorem~\ref{thm:gradfreecycle} since after some
algebraic manipulation, it is a direct application
of~\cite[Theorem~2.1]{benaim:1995aa} which is 
stated below.
\begin{theorem}[Theorem~2.1~\cite{benaim:1995aa}]
    Let $\xi\subset X$ be a hyperbolic linearly unstable cycle of $h$.
    Assume the following (i) $h\in C^2$; (ii) $c_1/t^{\eta}\leq \gamma_t\leq c_2/t^{\eta}$ with
    $0<c_1\leq c_2$ and $0<\eta\leq 1$; and (iii) there exists $b\geq 0$ such that
    for all unit vectors $v\in \mb{R}^m$, $\mb{E}[(w_t\cdot
    v)^+|\mc{F}_t]\geq b$. Then $P(L(\{x_t\})=\xi)=0$.
    \label{thm:benaim}
\end{theorem}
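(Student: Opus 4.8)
The plan is to deduce this statement---that the stochastic approximation $\{x_t\}$ avoids the hyperbolic, linearly unstable cycle $\xi$---from a nonconvergence argument of the same type as the one underlying Theorem~\ref{thm:pementle}, but now carried out on the Poincar\'e return map of a transverse section rather than at an isolated unstable equilibrium. Since $\{L(\{x_t\})=\xi\}\subseteq\{x_t\to\xi\}$, it suffices to show $P(x_t\to\xi)=0$, and for this one may fix a thin tubular neighborhood $N$ of $\xi$ and work on each event $E_{T_0}=\{x_t\in N \text{ for all } t\geq T_0\}$ separately, since these exhaust $\{x_t\to\xi\}$.

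First I would set up Floquet coordinates on $N$. Because $\xi$ is a $C^2$ hyperbolic cycle of some period $\tau$, there are coordinates $(\theta,y)\in S^1\times\mb{R}^{m-1}$ with $\xi=\{y=0\}$ in which the time-$\tau$ flow map acts on the transverse coordinate through a monodromy operator $M$ with $\spec(M)$ off the unit circle, and, by linear instability, with $M$ having an eigenvalue of modulus strictly greater than $1$. Splitting $\mb{R}^{m-1}=E^s\oplus E^u$ according to $M$ and renorming so that $M|_{E^u}$ expands by a factor $\rho>1$, let $u_t$ be the $E^u$-component of $y(x_t)$. Next I would use the fact that, under the step-size bounds (ii) and the martingale-difference structure of the noise, the linearly interpolated iterates form an \emph{asymptotic pseudotrajectory} of the flow of $h$; hence over a time window of length $\approx\tau$ the iterates shadow a true trajectory, and along the sequence $t_k$ of approximate return times to the section one obtains a recursion of the form
\[ u_{t_{k+1}} = \Lambda_k u_{t_k} + \zeta_k + \varepsilon_k, \]
where $\Lambda_k\approx M|_{E^u}$ is uniformly expanding, $\zeta_k$ is the projected accumulated noise over block $k$, and $\varepsilon_k$ gathers the $O(\|y\|^2)$ curvature terms together with the discretization/pseudotrajectory error.

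The next step is to exploit the excitation hypothesis (iii), $\mb{E}[(w_t\cdot v)^+\mid\mc{F}_t]\geq b$ for every unit vector $v$: conditionally on the past this forces $\zeta_k$ to have probability bounded below of being of size on the order of $b\sum_{j\in\text{block }k}\gamma_{t_j}$ in the current unstable direction. One then runs the Pemantle-type bookkeeping: whenever $\|u_{t_k}\|$ becomes small, with conditional probability bounded away from $0$ the next block of noise kicks it back up to this order, and the expansion $\rho>1$ keeps it bounded away from $0$ for a macroscopic number of steps; since $\eta\leq1$ forces $\sum_t\gamma_t=\infty$, infinitely many such kicks occur, so $\limsup_k\|u_{t_k}\|>0$ almost surely on $E_{T_0}$. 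This contradicts $y(x_t)\to0$, i.e.\ $x_t\to\xi$, so $P(x_t\to\xi\mid E_{T_0})=0$ for each $T_0$, whence $P(x_t\to\xi)=0$ and $P(L(\{x_t\})=\xi)=0$. Feeding this back with $h=-\omega$ and step sizes rescaled to satisfy (ii) is the ``algebraic manipulation'' that yields Theorem~\ref{thm:gradfreecycle}.

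I expect the main obstacle to be the derivation of the recursion for $u_t$: near a periodic orbit the unstable subspace $E^u=E^u(\theta)$ rotates with the angular coordinate, so the one-step update is genuinely non-autonomous, and one must show that the expansion rate and the noise-excitation constant can be taken \emph{uniform} over $\theta\in S^1$ while bounding the curvature and shadowing errors $\varepsilon_k$ against the small quantity $\|u_{t_k}\|$. This uniform control around the cycle---which has no analogue in the isolated-fixed-point setting of Theorem~\ref{thm:pementle}---is the technical heart of the argument.
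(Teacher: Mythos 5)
The paper does not actually prove this statement: Theorem~\ref{thm:benaim} is quoted verbatim from the cited reference (Benaïm, 1995) and used as a black box to obtain Theorem~\ref{thm:gradfreecycle}, so there is no in-paper argument to compare yours against. That said, your sketch follows the same strategy as the cited source---Pemantle's nonconvergence argument transported to the Poincar\'e return map of a transverse section, with hyperbolicity of $\xi$ supplying uniform expansion of the unstable component and hypothesis (iii) supplying the noise excitation---and you correctly isolate the rotation of $E^u(\theta)$ around the cycle and the control of the shadowing/curvature errors as the technical core. The only caveat is that the steps you defer are exactly where the work lies: the return times $t_k$ are random, each block contains a number of iterates growing like $t^{\eta}$ (so the accumulated block noise must be controlled by a conditional second-moment argument, not just the one-step bound in (iii)), and the ``kick then expand'' bookkeeping needs the quantitative form of Pemantle's lemma rather than the qualitative ``infinitely many kicks'' conclusion. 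As a roadmap it is faithful to the original proof; as written it is not yet a proof.
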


 \subsection{Morse-Smale Games}
 \label{app:msms}
For a class of games admitting \emph{gradient-like} vector fields we can go beyond non-convergence results and give convergence guarantees. 
Following~\cite{benaim:1995aa}, we introduce a new class of games, which we call \emph{Morse-Smale
games}, that are a generalization of potential games. Such games represent an important
class since the vector field of $\omega$ corresponds to Morse-Smale vector field which is known
to be generic in $\mb{R}^2$ and are otherwise structurally
stable~\cite{hirsch:1976aa,palis:1970aa}. 

\begin{definition}
    A game $(f_1,\ldots, f_n)$ with $f_i\in C^r$ for some $r\geq 3$ and where strategy spaces $X_i$ is a smooth,
    compact manifold without boundary for each $i\in \mc{I}$ is a Morse-Smale game if the vector field
    corresponding to the differential $\omega$ is Morse-Smale---that is, the
    following hold:
    (i) all periodic orbits $\xi$ (i.e.~equilibria and cycles) are
            hyperbolic and $W^s(\xi)\pitchfork W^u(\xi)$ (i.e.~the stable and
            unstable manifolds of $\xi$ intersect transversally),
      (ii) every forward and backward omega limit set is a periodic orbit,
     (iii) and $\omega$ has a global attractor.
\end{definition}
The conditions of Morse-Smale in the above definition ensure that there are only
finitely many periodic orbits.  The dynamics of games with more general
vector fields, on the other hand, can admit chaos (e.g. the classic Lorentz attractor can be cast as gradient-play in a 3-player game). Hyperbolic equilibria and
periodic orbits are the only types of limiting
behavior that have been shown to correspond to strategies relevant to the
underlying game~\cite{benaim:1997ab}.
The simplest example of a Morse-Smale vector field is a gradient flow. However,
not all Morse-Smale vector fields are gradient flows and hence, not all
Morse-Smale games are potential games. 
\begin{example}
    Consider the
$n$-player game with $X_i=\mb{R}$ for each $i\in \mc{I}$ and
$f_n(x)=x_n(x_1^2-1), \ f_i(x)=x_ix_{i+1}, \ \forall i\in \mc{I}/\{n\}$
This  is a Morse-Smale game that is not a potential game.  Indeed, $\dot{x}=-\omega(x)$ where
$\omega=[x_2, x_3, \ldots, x_{n-1}, x_1^2-1]$
is a dynamical system with a Morse-Smale vector field that is not a gradient
vector field~\cite{conley:1978aa}.
\end{example}

Essentially, in a neighborhood of a critical point for a Morse-Smale game, the
game behavior can be described by a Morse function $\phi$ such that near critical
points $\omega$ can be written as $D\phi$ and away from critical points $\omega$
points in the same direction as $D\phi$---i.e.~$\omega\cdot D\phi>0$. 
Specializing the class of Morse-Smale games, we have stronger convergence
guarantees. 
\begin{theorem}
    Consider a Morse-Smale game $(f_1,\ldots, f_n)$ on smooth boundaryless
    compact manifold $X$. Suppose
    Assumptions~\ref{ass:estmartin} and \ref{ass:others} hold and that $\{x_t\}$
    is defined on $X$. Let $\{\xi_i, \
    i=1, \ldots, l\}$ denote the set of periodic orbits in $X$. Then
    $\sum_{i=1}^l P(L(\{x_t\})=\xi_i)=1$ and $P(L(\{x_t\})=\xi_i)>0$
    implies $\xi_i$ is linearly stable.
    Moreover, if the periodic orbit $\xi_i$
    with $P(L(\{x_t\})=\xi_i)>0$
     is an equilibrium, then it is either a
    non-degenerate differential Nash equilibrium---which is generically a local
    Nash---or a non-Nash locally asymptotically stable equilibrium. 
    \label{thm:Morsesmale}
\end{theorem}
 The proof of Theorem~\ref{thm:Morsesmale} follows by invoking
 Corollary~\ref{cor:benaim} which is stated below.
 \begin{corollary}[Corollary~2.2~\cite{benaim:1995aa}]
    Assume that there exists $\delta \geq 1$ such that
    $\sum_{n\geq 0}\gamma_n^{1+\delta}<\infty$ and that $h$ is a Morse-Smale vector
    field. If we denote by $\{\xi_i, \ i=1, \ldots, l\}$ the set of
    periodic orbits in $X$, then $\sum_{i=1}^lP(L(\{x_t\})=\xi_i)=1$.
    Further, if conditions (i)--(iii) of Theorem~\ref{thm:benaim} hold, then
    $P(L(\{x_t\})=\xi_i)>0$ implies $\xi_i$ is linearly stable.
    \label{cor:benaim}
\end{corollary}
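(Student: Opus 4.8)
The plan is to split the proof into three parts, following the stochastic approximation program of Benaïm: (a) show that the limit set $L(\{x_t\})$ is, almost surely, a nonempty, compact, connected, flow-invariant subset of the chain recurrent set of the flow $\Phi$ generated by $h$; (b) use the Morse--Smale structure to conclude that any such set is exactly one of the periodic orbits $\xi_i$, which yields $\sum_{i=1}^l P(L(\{x_t\})=\xi_i)=1$; and (c) invoke the non-convergence result of Theorem~\ref{thm:benaim} to show that any $\xi_i$ carrying positive probability must be linearly stable.

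For (a), I would verify the hypotheses of the limit-set theorem of \cite{benaim:1996aa} already quoted above. The assumption $\sum_{n\geq 0}\gamma_n^{1+\delta}<\infty$ with $\delta\geq 1$ yields $\sum_n \gamma_n^2<\infty$, while the standing step-size requirement $\sum_n\gamma_n=\infty$ holds; together with the bounded-variance martingale noise these ensure that the affine interpolation of $\{x_t\}$ is an asymptotic pseudotrajectory of $\Phi$. Since $X$ is compact and boundaryless the iterates are automatically bounded, so \cite{benaim:1996aa} applies and $L(\{x_t\})$ has the stated properties almost surely.

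For (b), I would appeal to the structure theory of Morse--Smale flows on a compact manifold: such a flow has only finitely many periodic orbits $\xi_1,\dots,\xi_l$, all hyperbolic and pairwise disjoint, and---crucially, because of the transversality/no-cycle condition $W^s(\xi)\pitchfork W^u(\xi)$---its chain recurrent set coincides with $\bigcup_{i=1}^l \xi_i$. The sets $\xi_i$ are then the connected components of this chain recurrent set and sit at positive mutual distance, so a connected subset lands inside a single $\xi_{i^*}$. Because $L(\{x_t\})$ is nonempty, closed, and invariant, and a periodic orbit is a minimal set, we get $L(\{x_t\})=\xi_{i^*}$. As this holds almost surely, $\sum_{i=1}^l P(L(\{x_t\})=\xi_i)=1$.

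For (c), under the additional conditions (i)--(iii) of Theorem~\ref{thm:benaim}, that theorem gives $P(L(\{x_t\})=\xi)=0$ for every hyperbolic linearly unstable cycle $\xi$. Each $\xi_i$ is hyperbolic, so if it were linearly unstable its probability would vanish; contrapositively $P(L(\{x_t\})=\xi_i)>0$ forces $\xi_i$ to be linearly stable. The main obstacle is the dynamical-systems input in (b)---identifying the chain recurrent set of a Morse--Smale flow with the finite disjoint union of its periodic orbits---since this is precisely where the no-cycle structure guaranteed by transversality is indispensable; once that identification is in hand, connectedness, invariance, and minimality finish the reduction quickly.
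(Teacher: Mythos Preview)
The paper does not supply its own proof of this statement: Corollary~\ref{cor:benaim} is quoted verbatim from \cite{benaim:1995aa} and used as a black box to establish Theorem~\ref{thm:Morsesmale}. So there is no ``paper's own proof'' to compare your proposal against.

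That said, your sketch is a faithful reconstruction of the standard argument in Bena\"{i}m's stochastic approximation program, and the three-step decomposition (asymptotic pseudotrajectory $\Rightarrow$ limit set is internally chain transitive; Morse--Smale structure $\Rightarrow$ chain recurrent set is the finite disjoint union of hyperbolic periodic orbits; non-convergence to unstable orbits) is exactly the route taken in \cite{benaim:1995aa,benaim:1996aa}. One small caveat on part~(c): as stated in this paper, Theorem~\ref{thm:benaim} is formulated for \emph{cycles}, which the paper defines as \emph{non-stationary} periodic orbits, whereas the $\xi_i$ in the corollary include equilibria as well. To cover the equilibrium case you would need the companion non-convergence result for hyperbolic unstable fixed points (Pemantle's theorem, stated here as Theorem~\ref{thm:pementle}, or its manifold analogue in \cite{benaim:1995aa}). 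This is a bookkeeping issue rather than a genuine gap, since both ingredients are available in the same references.
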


Thus, in  Morse-Smale games, with probability one, the limit sets of competitive  gradient-based learning with stochastic updates are attractors (i.e., periodic orbits, which includes limit cyles and equilibria) of $\dot{x}=-\omega(x)$ and if any attractor has positive probability of being a limit set of the players' collective update rule, then it is (linearly) stable.
Moreover, attractors that are equilibria are either non-degenerate differential Nash equilibria (generically local Nash equilbiria) or non-Nash locally asymptotically stable equilibria, but not saddle points. 

If we further restrict the class of games to potential games, the results for Morse-Smale games
imply convergence to Nash almost surely, a particularly strong convergence guarantee. 
\begin{corollary}
  Consider the game $(f_1,\ldots, f_n)$ on smooth boundaryless
    compact manifold $X=X_1\times \cdots \times X_n$ admitting potential function $\phi$. Suppose each agent $i\in \mc{I}$ adopts a
    stochastic gradient-based
  learning algorithm that satisfies Assumptions~\ref{ass:estmartin}
  and \ref{ass:others} and such that $\{x_t\}$ evolves on $X$. Further, suppose that for each $i\in \mc{I}$, there
  exist a constant $b_i>0$ such that 
  $\mb{E}[(w_{i,t}\cdot v)^+|\mc{F}_{i,t}]\geq b_i$
  for every unit vector $v\in \mb{R}^{m_i}$.
  Then, competitive stochastic gradient-based learning converges
  to a non-degenerate differential Nash equilibrium almost surely.
  \label{cor:potgradfree}
\end{corollary}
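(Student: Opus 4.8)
The plan is to derive this corollary as a specialization of the Morse--Smale game result, Theorem~\ref{thm:Morsesmale}, using the extra structure of potential games already exploited in the proof of Corollary~\ref{cor:msfinite}. First I would argue that a potential game is (generically) a Morse--Smale game: since $\mc{G}$ admits a potential function $\phi$, the mean field is $\omega = D\phi$, so $\dot x = -\omega(x)$ is the gradient flow of $\phi$ on the compact boundaryless manifold $X$, and for $\phi$ Morse with the stable/unstable transversality condition this flow is a Morse--Smale vector field. The summability condition needed for Corollary~\ref{cor:benaim} ($\sum_n \gamma_n^{1+\delta}<\infty$ with $\delta=1$) is precisely $\sum_t \gamma_t^2<\infty$ from Assumption~\ref{ass:others}, and Assumptions~\ref{ass:estmartin} and \ref{ass:others} together with the uniform lower bound $\mb{E}[(w_{i,t}\cdot v)^+\mid \mc{F}_{i,t}]\geq b_i$ supply the remaining hypotheses.

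Next I would invoke Theorem~\ref{thm:Morsesmale}: almost surely $L(\{x_t\})$ coincides with one of the finitely many periodic orbits $\xi_1,\dots,\xi_l$; any $\xi_i$ with $P(L(\{x_t\})=\xi_i)>0$ is linearly stable; and if such a $\xi_i$ happens to be an equilibrium it is either a non-degenerate differential Nash equilibrium or a non-Nash locally asymptotically stable equilibrium. To finish, two reductions remain. The first eliminates cycles: as in the proof of Corollary~\ref{cor:msfinite}, the existence of a potential function forces $D\omega = D^2\phi$ to be symmetric --- formally, writing $\omega = \sum_i D_if_i\, \dd x_i$ one has $\dd(\omega)=\sum_{j>i}(D_{ij}f_j - D_{ji}f_i)\,\dd x_i \wedge \dd x_j = 0$ --- and a gradient flow has no non-stationary periodic orbits, so every $\xi_i$ is an equilibrium. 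Hence, almost surely, $L(\{x_t\})$ is a single linearly stable equilibrium $x^\ast$.

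The second reduction eliminates the non-Nash alternative: being linearly stable, $x^\ast$ is a locally asymptotically stable equilibrium of $\dot x=-\omega(x)$, and by Proposition~\ref{prop:potentialgame} every locally asymptotically stable equilibrium of a potential game is a non-degenerate differential Nash equilibrium. Therefore the ``non-Nash LASE'' option in Theorem~\ref{thm:Morsesmale} cannot occur, and $x^\ast \in \NDDNE(\mc{G})$ (which is generically a local Nash equilibrium by \cite{ratliff:2014aa}), giving the claimed almost sure convergence.

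I expect the main obstacle to be the first step --- justifying that the potential game falls inside the Morse--Smale framework so that Theorem~\ref{thm:Morsesmale}/Corollary~\ref{cor:benaim} is applicable. Gradient flows are not automatically Morse--Smale; one needs $\phi$ to be a Morse function whose gradient flow satisfies the stable/unstable transversality condition, which is generic but may warrant a genericity caveat (and one should also reconcile the $C^{s}$, $s\geq 2$ regularity in Assumption~\ref{ass:others} with the $C^r$, $r\geq 3$ requirement in the definition of a Morse--Smale game). The remaining steps are essentially bookkeeping: matching the noise and step-size hypotheses to those of the cited stochastic-approximation theorems and re-using the symmetry of $D\omega$ together with Proposition~\ref{prop:potentialgame}.
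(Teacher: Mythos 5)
Your proposal follows essentially the same route as the paper's own proof: reduce to Theorem~\ref{thm:Morsesmale} (via Corollary~\ref{cor:benaim}), use the fact that $\omega=D\phi$ is a gradient flow to rule out non-stationary periodic orbits, and use the symmetry of $D\omega$ (equivalently, Proposition~\ref{prop:potentialgame}) to exclude the non-Nash locally asymptotically stable alternative. Your caveat that a gradient flow is not automatically Morse--Smale (one needs $\phi$ Morse plus transversality, which holds only generically) is a legitimate point that the paper itself glosses over by calling potential games ``trivially'' Morse--Smale, so it is a strength of your write-up rather than a gap.
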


The proof of Corollary~\ref{cor:potgradfree} follows from the fact that potential games are trivially Morse-Smale games that admit no periodic cycles as we showed in the proof of Corollary~\ref{cor:msfinite}. 

\begin{proof}[Proof of Corollary~\ref{cor:potgradfree}]
    Consider a potential game $(f_1,\ldots, f_n)$ where each $X_i$ is a smooth,
    compact boundaryless manifold. Then $\omega=D\phi$ for some $\phi\in C^r$
    which implies that $\omega$ is a gradient flow and hence, does not admit
    limit cycles. Let $\{\xi_i, \ i=1, \ldots, l\}$ be the set of equilibrium
    points in $X$. Under the assumptions of Theorem~\ref{thm:Morsesmale},
     $\sum_{i=1}^lP(L(\{x_t\})=\xi_i)=1$ and, if
    $P(L(\{x_t\})=\xi_i)>0$, then $\xi_i$ is a linearly stable equilibrium point
    which is a non-degenerate differential Nash equilibrium of the game due to
    the fact that $D\omega(x)$ is symmetric in potential games.
    Hence,  a sample path $\{x_t\}$ converges to a non-degenerate differential Nash equilibrium with probability one. Moreover, 
    by \cite{ratliff:2014aa}, we know it is generically a local Nash.
\end{proof}

We note, that even though a potential function is
enough to guarantee convergence to a local Nash equilibrium, potential games can
still admit local Nash equilibria that are strict saddle points as shown in Section~\ref{sec:connections}. Thus, even this
relatively well-behaved class of games has fundamental problems when applying a
gradient-based learning scheme. \fi
\section{Classes of Gradient-Based Learning Algorithms}
\label{sec:examples}
In this section, we provide derivation of the gradient-based learning rules
provided in Table~\ref{tab:examples}.
We note that the derivation of gradient-based approaches for multi-armed bandits can be found
in~\cite{sutton:2017aa} among other classic references on reinforcement
learning.  
\subsection{Online Optimization: Gradient Play in Non-Cooperative Games}
\label{sec:limitcycle}

We first show that classical online optimization algorithms fit into the framework we describe. In this case, each agent is directly trying to minimize their own function $f_i(x_i,x_{-i})$, which can depend on the current iterate of the other agents. 
There are many examples in the optimization literature of this type of setup.
We note that in the full information case, the competitive gradient-based
learning framework we describe here is simply \emph{gradient
play}~\cite{fudenberg:1998aa}, a very well-studied game-theoretic learning rule.

Of more interest are some gradient-free online optimization algorithms that also fit into the framework we describe. The game can be described as follows.
At each iteration, $t$ of the game, every player publishes their current iterate
$x_{i,t}$. Player $i$, implementing this algorithm, then updates their iterate by
taking a random unit vector $u$, and querying $f_i(x_i+\delta_iu,x_{-i})$. The
update map is given by
$x_{i,t+1}=x_{i,t}-\gamma_i f_i(x_i+\delta_iu,x_{-i})u.$
It is shown in~\cite{flaxman:2005aa} that $f_i(x_i+\delta_iu,x_{-i})u$ is an
unbiased estimate of the gradient of a smoothed version of
$f_i$---i.e.
$\hat{f}_i(x_i,x_{-i})=\mathbb{E}_v[f_i(x+\delta v,x_{-i})]$.
 Thus the loss function being minimized by the agent is $\hat{f}_i$. In this
 case, the results on characterizing limiting behavior presented in Section~\ref{sec:gradientfree} apply.
\subsection{Generative Adversarial Networks}
Generative adversarial networks take a game theoretic approach to fitting a
generative model in complex structured spaces. Specifically, they approach the
problem of fitting a generative model from a data set of samples from some
distribution $Q\in \Delta(Y)$ as a zero-sum game between a \emph{generator} and a
\emph{discriminator}.
In general, both the generator and the discriminator are modeled as deep neural networks.
The generator network outputs a
sample $G_\theta(z)\in Y$ in the same space $Y$ as the sampled data set given a random noise signal
$z\sim F$  as
an input. The discriminator $D_w(y)$ tries  to  discriminate between
a true sample and a sample generated by the generator---that is, it takes as
input a sample $y$ drawn from $Q$ or the generator and tries to determine if its
\emph{real} or \emph{fake}. 
The goal, is to find a Nash equilibrium of the zero-sum game under which the generator will learn to
generate samples that are  indistinguishable
from the true samples---i.e.~in equilibrium, the generator has learned the
underlying distribution.

To prevent instabilities in the training of GANs with zero-one discriminators,
the Wasserstein GAN attempts to approximate the Wasserstein-1 metric between the true
distribution and the distribution of the generator. In this setting,
$D_w(\cdot)$ is a $1$-Lipschitz function leading to the problem
\begin{align*}
 \textstyle   \inf_\theta\sup_w \mb{E}_{y\sim Q}[D_w(y)]-\mb{E}_{z\sim
    F}[D_w(G_\theta(z))]
\end{align*}
which has corresponding dynamics  $w_{t+1}=w_t+\gamma \nabla_wL(\theta_t,w_t)$
and $\theta_{t+1}=\theta_t-\gamma \nabla_\theta L(\theta_t,w_t)$ where 
   $ L(\theta, w)= \mb{E}_{y\sim Q}[D_w(y)]-\mb{E}_{z\sim
    F}[D_w(G_\theta(z))]$ and
where $\gamma$ is the learning rate.

GANs are notoriously difficult to train. The typical approach is to allow each
player to perform (stochastic) gradient descent on the derivative of
their cost with respect to their own choice variable. 
There are two important observations about gradient-based learning approaches to
GANs relevant to this paper. First, the equilibrium that is sought is generally
a saddle point and second, the dynamics of GANs are complex enough to admit
limit cycles ~\cite{mertikopoulos:2018aa}.
None-the-less, training GANs with gradient descent is still
very common.  We note that our results suggest that, on top of periodic orbits and oscillations, training GANs with gradient descent can result in convergence to non-Nash equilibria.

\subsection{Multi-Agent Reinforcement Learning Algorithms}

Consider a setting in which all agents are operating in an MDP. There is a shared state space $\mathcal{S}$. 
Each agent, indexed by $\mathcal{I}=\{1,\ldots,n\}$ has their own action space
$U_{i}$ and reward function $R_i: \mathcal{S} \times  U \rar \Delta_\mathbb{R}$
where $U=U_1\times \cdots\times U_n$. We note the reward functions could themselves be random, but for illustrative purposes we suppose they are deterministic.
Finally, the dynamics of the MDP are described by a state transition kernel
$P:\mathcal{S} \times  U \rar \Delta_\mathcal{S}$ and an initial state distribution $P_0$. 
Each agent $i$ also has a policy, $\pi_{i}$, that returns a distribution over $U_i$ for each state $s \in \mathcal{S}$. 
We define a trajectory of the MDP, $\tau$ as
$\tau=\{(s_t,u_{i,t},u_{-i,u})\}^{T-1}_{t=0}$. Thus, a trajectory is a finite
sequence of states, the actions of each player in that state, and the reward
agent $i$ received in that state, where $T$ is the time horizon. Given fixed
policies we can define a distribution over the space of all trajectories
$\Gamma$, namely $P_{\Gamma}(\pi)$, by 
\begin{align*}
  \textstyle  P_{\Gamma}(\tau;\pi)&  \textstyle=P_0(s_0)\prod_{i\in
\mathcal{I}}\pi_i(u_{i,0}|s_0)\cdots P(s_{t}|s_{t-1},u_{t-1})\prod_{i\in
\mathcal{I}}\pi_i(u_{i,{t}}|s_{t})\cdots
\end{align*}
The goal of each single agent in this setup is to maximize their cumulative
expected reward over a time horizon $T$. That is, the agent is trying to find a policy $\pi_i$ so as to maximize some function, which in keeping with our general formulation in Section \ref{sec:prelims}, we write as $-f_i$ since this problem is a maximization.
When an agent is employing policy gradient in this MARL setup, we assume that
their policy comes from a parametric class of policies parametrized by $x_i \in
X_i\subset \mathbb{R}^{m_i}$. To simplify notation, we write the parametric policy as
$\pi_i(x_i)$ where for each $x_i$, given an state $s$, $\pi_i(x_i)$ is a
probability distribution on actions $u_i$ which we denote by
$\pi_i(x_i)(\cdot|s)$. 

The policy gradient MARL algorithm can be reformulated in the
    competitive gradient-based learning framework.      
    An agent $i$ using policy gradient is trying to tune the parameters $x_i$ of
their policy to maximize their expected reward over a trajectory of length
$T$. We define the reward of agent $i$ over a trajectory of the MDP, $\tau\in
\Gamma$, to be $\boldsymbol{R}_i(\tau)=\sum_{t=0}^{T-1} R_i(s_t,_{i,t},u_{-i,t})$.
Thus, each agent's loss function $f_i$, in keeping with our notation, is given by
$f_i(x_i,x_{-i})=-J_i(\pi_i(x_i),\pi_{-i}) = -\mathbb{E}_{\tau \sim
    P_{\Gamma}(\pi)}[\boldsymbol{R}_i(\tau))].$
The actions of agent $i$ in the continuous
game framework described in previous sections are the parameters of their
policy, and thus their action space is $X_i\subset\mathbb{R}^{m_i}$. We note that we have
made no assumptions on the other player's actions $x_{-i}$. That is, they do not
need to be employing the same parameterized policy class or exactly the same
gradient-based update procedure; the only requirement is that they also be using a gradient based multi-agent learning algorithm, and that their actions give rise to a set of policies $\pi_{-i}$ that govern the way they choose their actions in the MDP.

In the full information case, at each round, $t$ of the game, a player plays
according to $\pi_{i}(x_{i,t})$ for a time horizon $T$, and then performs a
gradient update on their parameters where $D_if_i(x_i,x_{-i})=D_iJ_i(\pi_i({x_i}),\pi_{-i,t})$ is given by
\begin{align}
D_i J_i(\pi_i({x_i}),\pi_{-i})&=\textstyle\mathbb{E}_{\tau\sim
    P_{\Gamma}(\pi)}\Big[
\sum_{t=0}^{T-1}R_i(s_t,u_t)\textstyle\sum_{j=0}^t\nabla_{x_i}
\log\pi_i(x_i)(u_{i,j}|s_{j})\Big]
\label{eq:polgrad}
\end{align}
The derivation of this gradient is exactly the same as that of classic policy gradient.
From \eqref{eq:polgrad} it is clear that an unbiased estimate of the gradient can be constructed. At each time $t$ in the
policy gradient update procedure, agent $i$ receives a $T$ horizon roll-out, say
$z_{i,t}=\{(s_{k},u_{i,k},r_{i,k})\}_{k=0}^{T-1}$, and constructs the
unbiased estimate of the gradient---i.e.
$\textstyle\widehat{\D_{i} J_i}=\sum_{k=0}^{T-1}r_{i,k}\big(\sum_{j=0}^k\nabla_{x_i}
\log\pi_i(x_{i,t})(u_{i,j}|s_{j})\big).$
We note that in this case, the agent does not need to know the policies of the
other agents, or anything about the dynamics of the MDP. The agent can construct the estimator solely from the sequence of states, the reward they received in those states, and their own actions.
With these two derivations of the gradient for the full information and
gradient-free cases, policy gradient for MARL conforms to the competitive
gradient-based learning framework and hence, the results of
Section~\ref{sec:results} apply under appropriate assumptions.

\bibliographystyle{siamplain}
\bibliography{arxiv}

\end{document}